\documentclass{article}
\usepackage{wrapfig}

 \usepackage[preprint]{neurips_2026}

\usepackage[utf8]{inputenc} 
\usepackage[T1]{fontenc}    
\usepackage{hyperref}       
\usepackage{url}            
\usepackage{booktabs}       
\usepackage{amsfonts}       
\usepackage{nicefrac}       
\usepackage{microtype}      
\usepackage{xcolor}         
\usepackage{graphicx}
\usepackage{tcolorbox}
\usepackage{enumitem}
\usepackage{amsmath}
\usepackage{multirow}
\usepackage{amssymb}
\usepackage{mathtools}
\usepackage{amsthm}

\usepackage{algorithm}
\usepackage{algorithmic}

\usepackage{subcaption}

\usepackage[capitalize,noabbrev]{cleveref}

\theoremstyle{plain}
\newtheorem{theorem}{Theorem}[section]

\theoremstyle{definition}

\theoremstyle{remark}

\title{Validity‑Calibrated Reasoning Distillation}

\author{%
  Khouloud Saadi\thanks{Corresponding author.} \\
  Department of Computer Science\\
  KAUST, Thuwal, Saudi Arabia \\
  \texttt{khouloud.saadi@kaust.edu.sa} \\
  \And
  Di Wang \\
  Department of Computer Science \\
  KAUST, Thuwal, Saudi Arabia \\
  \texttt{di.wang@kaust.edu.sa} \\}

\begin{document}

\maketitle
\begin{abstract}
Reasoning distillation aims to transfer multi-step reasoning capabilities from large language models to smaller, more efficient ones. While recent methods have shown promising gains, they typically rely on static teacher–student hierarchies and frame distillation as trajectory imitation. This is misaligned with the structure of reasoning, where intermediate steps are often \emph{locally under-specified}: global correctness constrains the final answer, but does not uniquely determine each intermediate move.
We propose \emph{validity-calibrated reasoning distillation}, a framework that treats reasoning distillation as a problem of \emph{local learning-signal allocation} rather than path alignment. Instead of enforcing token-level imitation, we compare the student’s and teacher’s proposed next-step actions under the same prefix and use their \emph{relative local validity} to modulate the strength of the distillation update. This yields a dynamic, context-dependent supervision mechanism that preserves the teacher’s structural guidance while adapting update strength to local reasoning quality.
Across mathematical reasoning, code generation, and instruction-following benchmarks, our method consistently outperforms strong distillation baselines. These results indicate that effective LLM reasoning distillation is governed not by rigid trajectory imitation, but by locally calibrated allocation of learning signal.
\end{abstract}

\section{Introduction}

Reasoning has become a defining capability of modern language models
\citep{kodistillm,yangsupercorrect,pang2024iterative,shao2024deepseekmath,achiam2023gpt},
driving progress in mathematical problem solving
\citep{cobbe2021training,hendrycks2measuring}, code generation
\citep{luo2024wizardcoder,xu2024wizardlm}, and complex instruction following.
Frontier LLMs such as GPT-4 \citep{achiam2023gpt}, DeepSeek-R1
\citep{guo2025deepseek}, and ReasonFlux \citep{yang2025reasonflux} exhibit strong
multi-step reasoning abilities but remain prohibitively expensive for widespread
deployment. In contrast, smaller language models are far more efficient and accessible,
yet they still struggle to perform reliable, context-dependent reasoning
\citep{yang2024qwen2,grattafiori2024llama}. Closing this gap is essential for
building scalable, broadly usable AI systems.

Recent efforts to improve the reasoning performance of compact models have focused on \emph{distillation}~\citep{schmidhuber1992learning,hinton2015distilling},
aiming to transfer reasoning behavior from powerful LLMs to smaller students
\citep{liu2024deepseek,yangsupercorrect,kodistillm}.
Most existing reasoning distillation methods~\citep{li2022explanations,fu2023specializing}
formulate supervision as a \emph{trajectory imitation} problem
\citep{magister2023teaching,liu2024deepseek,kodistillm,yangsupercorrect},
enforcing token-level alignment with a teacher rollout.
This is typically implemented through fixed teacher-centric objectives or
schedules that interpolate between teacher and student prefixes
\citep{liu2024deepseek,kodistillm}, training the student to reproduce a specific
realization of the teacher’s reasoning process.

At a deeper level, trajectory-based reasoning distillation relies on an implicit
assumption that is rarely made explicit:  \textit{a model that is globally superior
also provides uniformly superior learning signal at each intermediate step}~\citep{li2024mode,shridhar2023distilling,li2023symbolic}.
In other words, global model quality is assumed to transfer monotonically to
local decision quality under all prefixes.
While this assumption is natural in standard prediction tasks~\citep{hinton2015distilling,jiao-etal-2020-tinybert}, where supervision
is aligned at each output position, it is fundamentally mismatched with
multi-step reasoning. Reasoning is evaluated only at the level of the final answer, whereas intermediate steps are latent variables whose correctness is often under-specified~\citep{liaoreward,achiam2023gpt}. Hence, global capability does not induce a total ordering over local
reasoning moves, and the teacher-provided learning signal can vary
substantially across decision points.
When this monotonicity breaks, uniform distillation becomes structurally miscalibrated: strong updates may be enforced where local evidence is weak, while informative local signals remain underutilized.

\begin{wrapfigure}{r}{0.48\textwidth}
  \centering
  \includegraphics[width=0.5\textwidth]{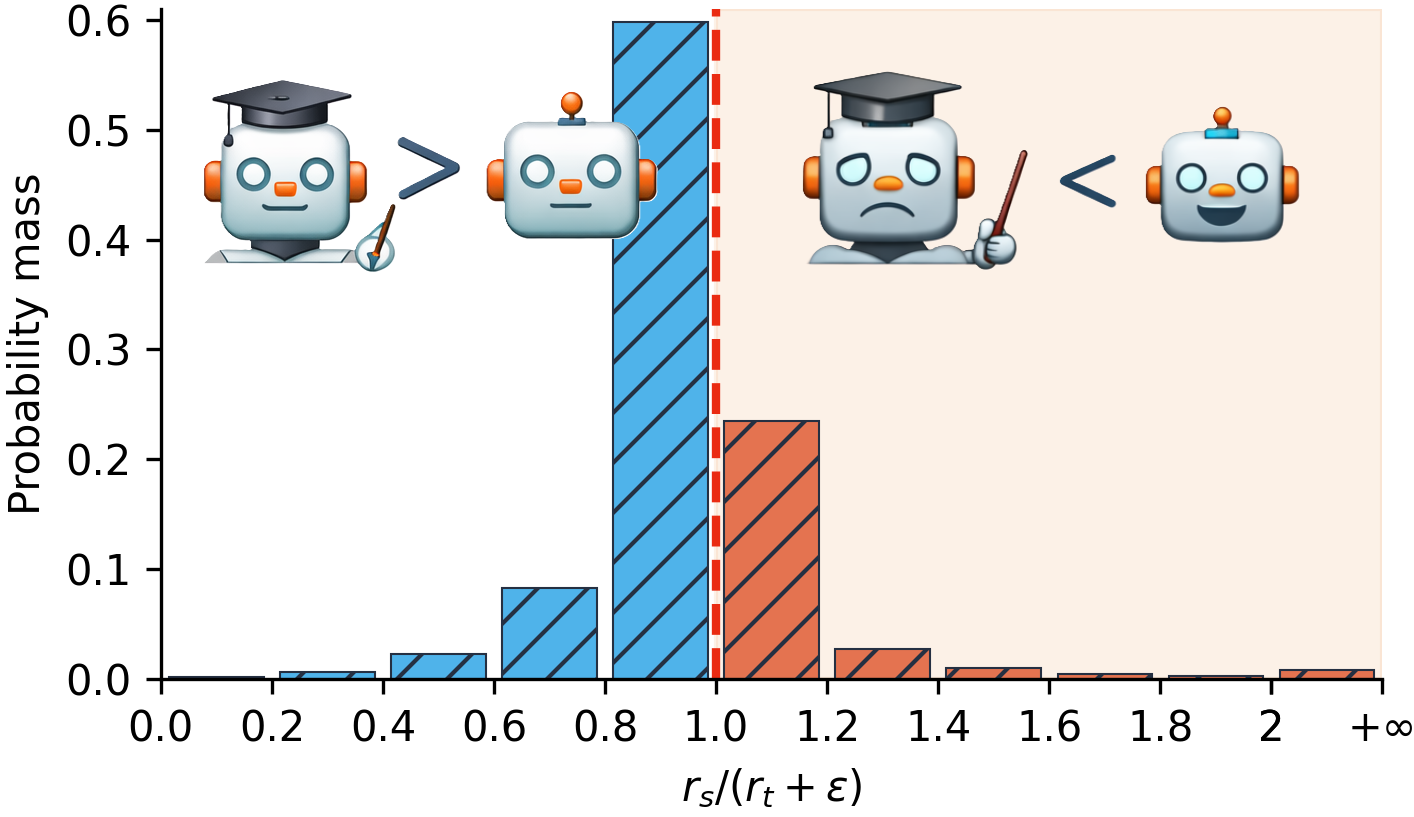}
  \vspace{-1.5 em}
  \caption{Distribution of the reward ratio $r_s / r_t$ between the student Qwen2.5-Math-1.5B ($r_s$) and the teacher Qwen2.5-Math-7B-Instruct ($r_t$) policies. $r_s$ and $r_t$ are computed with Skywork-o1-OpenPRM-Qwen-2.5-1.5B.
While the ratio is expected to concentrate below~1, a substantial fraction of probability mass (28.8\%) lies in the region $\geq 1$, indicating frequent cases where the student attains higher reward than the teacher.}
  \label{fig:ratio_dist}
  \vspace{-1em}
\end{wrapfigure}

To examine this mismatch directly, we compare the \emph{local validity} of
teacher LLM and student next-token proposals under identical prefixes, using an
auxiliary judge to score token-level transitions. Figure~\ref{fig:ratio_dist}
shows the empirical distribution of the reward ratio $r_S / r_T$ between a
Qwen2.5-Math-1.5B student and a Qwen2.5-Math-7B-Instruct teacher. If global
teacher superiority consistently translated into local step superiority, the
ratio would concentrate strictly below $1$. Instead, we observe that in
\textbf{28.8\%} of decision points, the student’s proposed continuation attains
equal or higher local reward than the teacher’s under the same context. Appendix~\ref{app:student_better_examples} provides
qualitative examples from the same setting, showing cases where the student
proposes a better or locally valid alternative to the teacher’s
next-token continuation. This
phenomenon does not contradict the teacher’s overall strength; rather, it
reveals that \emph{local step quality varies substantially even when one model
is globally stronger}. 

These observations expose a fundamental limitation of trajectory imitation.
By applying uniform distillation pressure at every token, existing objectives
conflate two distinct roles of the teacher: providing \emph{structural guidance}
about the solution manifold, and determining the \emph{strength of the learning
signal} at each decision point. When intermediate reasoning steps are locally
ambiguous or weakly informative, strong imitation can be harmful;
conversely, when local evidence is strong, uniform supervision fails to exploit
it. Treating all teacher actions as equally authoritative therefore leads to
miscalibrated learning signals across the reasoning chain.

This perspective reframes the core question of reasoning distillation.
Rather than asking which trajectory should be imitated, the central challenge is
to determine \emph{how strongly the student should update at each decision
point}. We propose \textbf{Validity-Calibrated Reasoning Distillation (VCRD)}, a
framework that allocates token-level supervision based on the \emph{relative
local validity} of teacher and student proposals. At each prefix, both models
produce candidate next steps, which are evaluated under the same context by an
auxiliary judge. Their relative validity then scales the distillation update,
yielding three regimes: parity when both steps are similarly justified,
attenuation when the teacher is locally stronger, and \emph{amplification} when
the student’s continuation is more locally valid. Importantly, VCRD does not
alter the direction of supervision; it adaptively calibrates its strength,
preserving teacher guidance while aligning learning pressure with local
reasoning evidence. A detailed related work section is provided in Appendix~\ref{rl}. In summary, our contributions are as follows:
\begin{itemize}[leftmargin=*,topsep=0em,itemsep=-0.5em]
    \item We identify a previously implicit \emph{monotonicity assumption} in trajectory-based LLM reasoning distillation, namely, that global teacher superiority implies uniformly superior local learning signal, and show that this assumption is violated in multi-step reasoning.
    \item We introduce \textbf{VCRD} framework, which allocates token-level supervision using the relative local
    validity of teacher and student proposals.
    \item We provide a theoretical justification via a teacher-anchored KL
    trust-region formulation, demonstrating that relative validity induces the
    correct first-order scaling of the distillation update.

    \item We empirically validate the importance of the \emph{amplification}
    regime, showing that VCRD yields consistent improvements across mathematical
    reasoning, code generation, and instruction-following tasks, with average
    gains of 2.77\%, 1.15\%, and 2.35\%, respectively.
\end{itemize}
\section{Proposed Approach: Validity-Calibrated Reasoning Distillation}
\label{sec:method}
\begin{figure}[bt]
  \vskip 0.2in
  \begin{center}
\centerline{\includegraphics[width=0.8\columnwidth]
{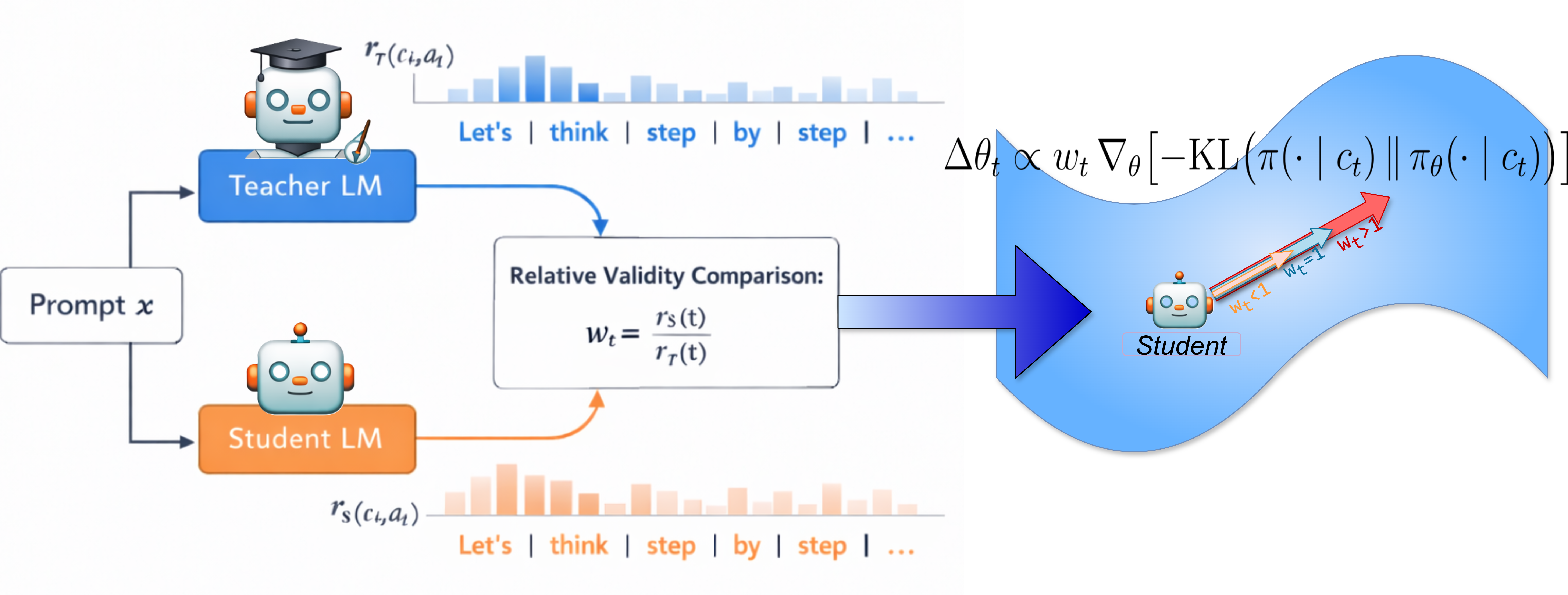}}
    \caption{Overview of VCRD.
Rather than enforcing uniform trajectory imitation, VCRD allocates
token-level learning signal based on the relative local validity of teacher and
student proposals under the same prefix.
By modulating update strength rather than direction, the method preserves
teacher guidance while adapting supervision to locally under-specified
reasoning steps.
    }
    \label{fig:method_overview}
  \end{center}
  \vspace{-3 em}
\end{figure}

Reasoning distillation differs from standard knowledge distillation in a
fundamental way.
In conventional prediction tasks, the teacher’s output defines a
well-specified target, and deviations by the student can be unambiguously
interpreted as errors~\citep{Sun2019PatientKD,jiao-etal-2020-tinybert}.
In multi-step reasoning, however, correctness constrains only the final answer:
intermediate steps are often locally under-specified, with multiple plausible
continuations coexisting under the same prefix. As a result, teacher superiority does not imply reliable supervision
at every decision point.

Despite this mismatch,  reasoning KD methods~\citep{kodistillm,wangabkd} assign the teacher a
uniformly privileged role, enforcing token-level alignment along an entire
reasoning chain and implicitly treating each teacher step as the unique correct
continuation.
This conflates the \emph{direction} of supervision with its \emph{strength},
leading to miscalibrated updates when local reasoning varies across
steps.
The central question is therefore not which trajectory should be imitated, but:
\vspace{-0.5 em}
\begin{tcolorbox}[
  colback=blue!2,
  colframe=blue!40!black,
  center title
]
\centering
\emph{Given this prefix and these candidate reasoning steps, how strongly should
the student update be applied at token $t$?}
\end{tcolorbox}

To answer this question, we introduce
\textbf{VCRD} framework:
At each prefix, an auxiliary judge evaluates the teacher’s and student’s
candidate next tokens under the same context, and their \emph{relative local
validity} determines the magnitude of the distillation update.
As illustrated in Figure~\ref{fig:method_overview}, this yields
prefix-conditioned, token-level supervision that reinforces informative teacher
guidance, attenuates weak signals, and amplifies
well-justified student local decisions, while preserving the teacher-anchored
optimization geometry.
\subsection{Problem Formulation}
We consider a prompt $x$ paired with an output sequence
$y = (y_1,\dots,y_T)$. A pretrained teacher model $p(y \mid x)$ and a student
$q_\theta(y \mid x)$ both generate tokens autoregressively: at step $t$, they
condition on the prefix $y_{<t}$, forming the context
$c_{t-1} = (x, y_{<t})$ and defining next-token distributions
$p(\cdot \mid c_{t-1})$ and $q_\theta(\cdot \mid c_{t-1})$. During distillation, we draw both a teacher rollout
$y^T \sim p(\cdot \mid x)$ and a student rollout
$y^S \sim q_\theta(\cdot \mid x)$. These induce two prefixes at position $t$:
$c^T_{t-1} = (x, y^T_{<t})$ and $c^S_{t-1} = (x, y^S_{<t})$. For any prefix
$c$ (in particular $c^T_{t-1}$ or $c^S_{t-1}$), the teacher and student define
conditional distributions $p(\cdot \mid c)$ and $q_\theta(\cdot \mid c)$, from
which they propose next tokens $a^T_t \sim p(\cdot \mid c)$ and
$a^S_t \sim q_\theta(\cdot \mid c)$.

These two conditioning contexts arise naturally in reasoning distillation.
Teacher prefixes $c^T_{t-1}$ anchor learning to high-quality partial traces,
while student prefixes $c^S_{t-1}$ expose the model to its own induced state
distribution, which is critical for robustness at inference time. Prior work
typically mixes these contexts using fixed schedules or interpolation
coefficients~\citep{kodistillm}. In contrast, we treat them symmetrically and determine the
supervision strength at each step based on the \emph{local validity} of the
proposed next-token decision under the current prefix.
\subsection{Local Validity}
To evaluate the quality of individual reasoning steps, we introduce an auxiliary
\emph{validity judge} $J$ that assigns a local validity score
$r(c_{t-1}, a_t) \in [0,1]$ to a proposed next token $a_t$ under prefix
$c_{t-1}$. The judge operates purely at the token level, assessing the local
coherence of the transition $(c_{t-1}, a_t)$ without considering global solution
correctness or full trajectories. At each step $t$, teacher and student produce candidate tokens $a_t^T$ and
$a_t^S$. We compare their local validity under the \emph{same} prefix.
Under the teacher prefix $c^T_{t-1}$ and the student prefix $c^S_{t-1}$, the relative validity ratios are:
\vspace{-0.5em}
\begin{equation}
\begin{aligned}
w_t^T &=
\frac{r(c^T_{t-1}, a^S_t)}
     {r(c^T_{t-1}, a^T_t) + \varepsilon},
\qquad
w_t^S &=
\frac{r(c^S_{t-1}, a^S_t)}
     {r(c^S_{t-1}, a^T_t) + \varepsilon}.
\end{aligned}
\label{eq:ratio_weights}
\end{equation}

\vspace{-1em}
where $\varepsilon$ is a small positive constant used for numerical 
stability. 
These prefix-conditioned ratios act as \emph{local learning-signal scalers},
determining the strength of the distillation update at each decision point. For clarity, we use $w_t$ below as a generic placeholder referring to either
$w_t^T$ or $w_t^S$, depending on whether the update is applied under a teacher
or student prefix.
They induce three regimes:

\begin{itemize}[leftmargin=*,topsep=0em,itemsep=0em]
\item \textbf{$w_t \approx 1$: parity.}
Teacher and student propose similarly
justified moves; the update is like standard distillation.

\item \textbf{$w_t < 1$: attenuation.} The teacher’s move is locally superior;
reducing the update prevents the student from over-committing in regions where
the local reasoning landscape is weakly informative or difficult to learn.

\item \textbf{$w_t > 1$: amplification.} The student proposes a more locally
coherent step. This reflects under-specification in the teacher distribution,
not incorrectness, and amplifying the update sharpens the student’s trajectory
within the teacher-supported solution manifold.
\end{itemize}
\subsection{Distillation Objective}
For any prefix $c$, the teacher and student define next-token distributions
$p(\cdot \mid c)$ and $q_\theta(\cdot \mid c)$. Rather than relying on the
standard forward or reverse KL divergences, we adopt the \emph{skew KL} (SKL) and
\emph{skew reverse KL} (SRKL) objectives \citet{kodistillm}, which provide more
stable behavior when teacher and student condition on different prefixes. We first introduce the mixture distributions:
\[
\begin{aligned}
m^{(\alpha)}_{p,q_\theta}(a\mid c)
&=
\alpha p(a\mid c)
+
(1-\alpha)q_\theta(a\mid c),
\quad
&
m^{(1-\alpha)}_{p,q_\theta}(a\mid c)
&=
(1-\alpha)p(a\mid c)
+
\alpha q_\theta(a\mid c).
\end{aligned}
\]
with $\alpha\in[0,1]$ controlling the skew between teacher and student
updates. The skewed divergences are then defined:
{\small
\begin{equation}
\begin{aligned}
D^{(\alpha)}_{\mathrm{SKL}}(p \,\|\, q_\theta)
&=
\mathrm{KL}\!\left(
p(\cdot\mid c)
\,\Big\|\,
m^{(\alpha)}_{p,q_\theta}(\cdot\mid c)
\right),
\hspace{0.3em}
&
D^{(\alpha)}_{\mathrm{SRKL}}(p \,\|\, q_\theta)
&=
\mathrm{KL}\!\left(
q_\theta(\cdot\mid c)
\,\Big\|\,
m^{(1-\alpha)}_{p,q_\theta}(\cdot\mid c)
\right).
\end{aligned}
\label{eq:skl_srkl_def}
\end{equation}
}
\vspace{-2 em}
\paragraph{Teacher-prefix supervision (LV--SKL) \& Student-prefix supervision (LV--SRKL).}
Under teacher prefixes $c^T_{t-1}$ and student prefixes $c^S_{t-1}$, we weight the SKL and the SRKL divergences, using the local-validity ratios $w_t^T$ and $w_t^S$, respectively:
{\small
\begin{equation}
\begin{aligned}
\mathcal{L}_{\mathrm{LV\!-\!SKL}}
&=
\sum_{t=1}^T
w_t^T
D^{(\alpha)}_{\mathrm{SKL}}\!\big(
  p(\cdot\mid c^T_{t-1})
  \,\big\|\,
  q_\theta(\cdot\mid c^T_{t-1})
\big),
\hspace{-0.6em}
&
\mathcal{L}_{\mathrm{LV\!-\!SRKL}}
&=
\sum_{t=1}^T
w_t^S
D^{(\alpha)}_{\mathrm{SRKL}}\!\big(
  p(\cdot\mid c^S_{t-1})
  \,\big\|\,
  q_\theta(\cdot\mid c^S_{t-1})
\big).
\end{aligned}
\label{eq:lv_skl_srkl}
\end{equation}
}
\vspace{-3 em}
\paragraph{Final objective.}
The overall loss combines both supervision regimes:
\begin{equation}
\mathcal{L}
=
\lambda_T\,\mathcal{L}_{\mathrm{LV\!-\!SKL}}
+
\lambda_S\,\mathcal{L}_{\mathrm{LV\!-\!SRKL}},
\qquad
\lambda_T, \lambda_S \ge 0.
\label{eq:final_loss}
\end{equation}
This objective preserves the geometric stability benefits of skewed KL while
allocating token-level supervision according to the locally inferred validity of
each teacher--student decision point. By dynamically attenuating or amplifying
updates based on relative local justification, VCRD provides a principled,
context-dependent alternative to trajectory-level distillation. For completeness,
Algorithm~\ref{alg:vcrd} in Appendix~\ref{appendB} summarizes the full training
procedure.
\section{Theoretical Perspective}
\label{sec:theory}
We provide a theoretical view that motivates validity-calibrated distillation.
Rather than treating distillation as trajectory imitation, we view it as
allocating learning signal across decision points in which the next step is
often locally under-specified. Formal proofs appear in
Appendix~\ref{appendA}.
\subsection{Reasoning as Sequential Local Decision Making}
\label{subsec:seq_decision}
We model autoregressive reasoning as a sequential decision process. At step $t$,
the model conditions on the prefix $c_t=(x,y_{\le t})$ and selects an action
$a_t\in\mathcal{V}$. The teacher and student induce next-token policies
$\pi(a\mid c_t)=p(a\mid c_t)$ and $\pi_\theta(a\mid c_t)=q_\theta(a\mid c_t)$, respectively. A key property of reasoning tasks is \emph{local under-specification}: correctness
constrains the final answer but typically does not fix a unique continuation at
each prefix. Disagreement between teacher and student at $c_t$ therefore need
not indicate error, but may reflect multiple locally coherent moves. This
invalidates uniform token-level imitation and raises the question:
\begin{tcolorbox}[
  colback=blue!2,
  colframe=blue!40!black,
  center title
]
\centering
\emph{Given a specific prefix \(c_t\), how much learning signal should be
allocated to this decision?}
\end{tcolorbox}

To formalize this idea, consider improving the student at a fixed prefix
$c_t$. The teacher distribution $\pi(\cdot\mid c_t)$ provides a strong
structural prior, while improvement should favor actions with high local
validity $r(c_t,a)$. This naturally leads to the teacher-anchored
trust-region objective:
\begin{equation}
\label{eq:trust_region_problem}
\max_{\tilde{\pi}}~
\mathbb{E}_{a\sim\tilde{\pi}}[\,r(c_t,a)\,]
\quad\text{s.t.}\quad
\mathrm{KL}\!\big(
  \tilde{\pi}(\cdot\mid c_t)
  \,\big\|\,
  \pi(\cdot\mid c_t)
\big)
\le \delta .
\end{equation}
As shown in subsection~\ref{subse1} of Appendix~\ref{appendA}, the optimal solution takes the form of an
exponentially tilted distribution:
\vspace{-1 em}
\begin{equation}
\label{eq:exp_tilt_solution}
\tilde{\pi}^\star(a\mid c_t)
\propto
\pi(a\mid c_t)\,
\exp\!\big(\eta\, r(c_t,a)\big),
\end{equation}
for a multiplier $\eta\ge0$ determined by the trust-region radius. Rather than
imitating a full trajectory, this update redistributes probability mass
\emph{within} the teacher’s support toward actions that exhibit higher local
validity. This cleanly separates structural guidance (the teacher manifold)
from per-step learning-signal strength, providing the foundation for VCRD.
\subsection{From optimal improvement to first-order learning-signal allocation}
\label{subsec:first_order}
The trust-region objective in Eq.~\eqref{eq:trust_region_problem} specifies an
\emph{ideal} update: reallocating probability mass within the teacher
distribution according to local validity. Computing the exponentially tilted
optimum in Eq.~\eqref{eq:exp_tilt_solution}, however, is infeasible in
large-vocabulary LLMs because the validity judge can evaluate only a small
number of candidate actions. Rather than constructing the full distribution, we
interpret the trust-region solution as defining a \emph{first-order improvement
direction}. Expanding $\log \tilde{\pi}^\star(a\mid c_t)$ around the teacher policy
$\pi(a\mid c_t)$ gives, for sufficiently small $\eta$:
\begin{equation}
\begin{aligned}
\log \tilde{\pi}^\star(a\mid c_t)
&= \log \pi(a\mid c_t)
+\eta\!\left(
    r(c_t,a)
    - \mathbb{E}_{a'\sim\pi}[r(c_t,a')]
  \right)
+ \mathcal{O}(\eta^2),
\end{aligned}
\label{eq:linearized_update}
\end{equation}
Thus, to first order, the exponential-tilt update implies a log-probability change
\begin{equation}
\Delta \log \tilde{\pi}(a\mid c_t)
\propto
r(c_t,a)-\mathbb{E}_{a'\sim\pi(\cdot\mid c_t)}[r(c_t,a')].
\label{eq:first_order_direction}
\end{equation}
showing that only \emph{relative} validity under the same prefix matters. In practice, the expectation in Eq.~\eqref{eq:first_order_direction} is
unavailable: at each prefix $c_t$ we observe only two realized actions, the
teacher token $a^T_t$ and the student token $a^S_t$, together with their
validities. This constitutes a two-sample bandit-feedback setting. To obtain a
stable, scale-invariant proxy for local improvement strength, we define $w(c_t)
=
\frac{r(c_t,a^S_t)}{\,r(c_t,a^T_t)+\varepsilon\,}$ as the \emph{validity ratio},
with $\varepsilon>0$ for numerical stability. The ratio $w(c_t)$ does not alter the direction of a KL-anchored update; it
rescales its magnitude. In Section~\ref{Rewardtype ablation}, we conduct an ablation study comparing this relative validity formulation to $r(c_t,a^S_t)- r(c_t,a^T_t)$ aka ($r_s -r_t$) directly and other alternative weighting choices and find that
our formulation yields consistently stronger performance and more stable training. The resulting first-order update at prefix $c_t$ is:
\begin{equation}
\label{eq:weighted_update}
\Delta\theta_t
\propto
w(c_t)\,
\nabla_{\theta}\!\left[
  -\mathrm{KL}\big(\pi(\cdot\mid c_t)\,\|\,\pi_{\theta}(\cdot\mid c_t)\big)
\right],
\end{equation}
with derivation in Appendix~\ref{appenda2}. The KL anchoring terms fix the update directions and keep learning anchored
to the teacher’s solution manifold, while the validity ratio controls the
local step size: $w(c_t)<1$ attenuates
updates when the student proposes a weak continuation, and $w(c_t)>1$
amplifies updates when the teacher’s sampled continuation is locally
under-specified. This implements a principled first-order allocator of
learning signal using only the two actions available at each prefix. Viewed in this light, the distillation objective introduced in the previous
section can be interpreted as a practical instantiation of this first-order
update. Rather than explicitly constructing the optimal distribution
$\tilde{\pi}^\star$, the method decomposes its effect across token-level KL
gradients under both teacher- and student-conditioned prefixes. The KL terms
define the update directions that preserve the teacher’s solution manifold,
while the validity ratio $w(c_t)$ governs the strength of these updates. This
realizes a distributed approximation of the trust-region improvement: learning
signal is adaptively allocated across decision points, amplifying updates when
the teacher’s continuation is locally under-specified and attenuating them when
the student proposes weak reasoning steps.
\section{Experiments}
\label{sec:experiments}
We evaluate our \textbf{ VCRD} framework across three complementary reasoning domains: (i) mathematical problem solving, (ii) code generation, and (iii) instruction following. These settings differ substantially in structure, difficulty, and the diversity of locally valid continuations, providing a comprehensive assessment of our proposed method. In our experiments, we follow a similar setup to the one outlined in~\citet{kodistillm}. Full experimental details appear in Appendix~\ref{expsetup}. In our experiments, we use a Process Reward Model (PRM) as the \emph{validity judge}, providing dense, step-level feedback on the local correctness of intermediate reasoning steps. 
Specifically, we employ Skywork-o1-OpenPRM-Qwen-2.5-1.5B~\citep{skywork2024o1}, one of the most advanced open-source PRM available at the time of our experiments. 
This PRM is built on Qwen2.5-Math-1.5B-Instruct, an instruction-tuned backbone, and its released model card reports strong performance on both mathematical and coding evaluations.\footnote{\url{https://huggingface.co/Skywork/Skywork-o1-Open-PRM-Qwen-2.5-1.5B}}.

Given a prefix $c$ and a candidate next token $a$, the PRM returns a scalar score $r(c,a)\in[0,1]$, with higher values indicating more coherent and contextually plausible reasoning. These prefix-conditioned signals serve as the backbone of VCRD, enabling us to compare the relative validity of the teacher’s and student’s next-step decisions under the same context and modulate the distillation strength accordingly. We further analyze sensitivity to the validity judge in Appendix~\ref{prm_sens}, showing that VCRD achieves nearly identical performance with 1.5B and 7B PRMs, thereby confirming its robustness to different PRM sizes.
\subsection{Mathematical Reasoning}
\textbf{Setup.} We evaluate VCRD on various math‑reasoning benchmarks, including GSM8K~\citep{cobbe2021training}, MATH~\citep{hendrycks2measuring}, SVAMP~\citep{patel2021nlp}, Minerva~\citep{lewkowycz2022solving}, Gaokao~\citep{zhang2023evaluating}, Olympiad~\citep{he2024olympiadbench}, SAT‑Math~\citep{zhong2024agieval}, CMATH~\citep{wei2023cmath}, AMC23~\citep{aimo2024amc}, and AIME24~\citep{maa2025aime}. We use the same setup as~\citep{kodistillm} with two teacher–student configurations: Qwen2-\allowbreak Math-\allowbreak 7B-\allowbreak Instruct
$\rightarrow$
Qwen2-\allowbreak Math-\allowbreak 1.5B and Qwen2.5-\allowbreak Math-\allowbreak 7B-\allowbreak Instruct
$\rightarrow$
Qwen2.5-\allowbreak Math-\allowbreak 1.5B. \\
\textbf{Results.} 
Tables~\ref{tab:qwen2_math_full} and~\ref{tab:qwen25_math_full} show that VCRD consistently improves math-reasoning performance across both the Qwen2-Math and Qwen2.5-Math settings. In the Qwen2-Math 7B→1.5B configuration, VCRD reaches an average Pass@1 of 56.06, outperforming the strongest baseline DistilLLM (54.47) and yielding gains on challenging datasets such as Olympiad. The improvements are even larger for Qwen2.5-Math 7B→1.5B, where VCRD attains 59.96 (+2.09 over DistillLM-2), with substantial boosts on competition-level benchmarks, most notably AMC23 and Olympiad. Remarkably, the VCRD-distilled 1.5B student approaches the average performance of its 7B teacher in the Qwen2.5 setting despite being nearly five times smaller. These results reflect VCRD’s core principle: relative-validity weighting provides a more reliable learning signal than uniform imitation, enabling the student to benefit from strong teacher steps while avoiding propagation of locally ambiguous reasoning.

\begin{table*}[t]
\centering
\caption{
Pass@1 results for distilling Qwen2-Math-7B-Instruct ($\mathcal{M}_T$) into Qwen2-Math-1.5B ($\mathcal{M}_S$) on eight math-reasoning benchmarks. AVG is the average over all datasets.
}

\label{tab:qwen2_math_full}

\resizebox{1\linewidth}{!}{%
\begin{tabular}{lcccccccc|c}
\toprule
\multicolumn{10}{c}{\textbf{Qwen2-Math-7B-Inst ($\mathcal{M}_T$) $\rightarrow$ Qwen2-Math-1.5B ($\mathcal{M}_S$)}} \\
\midrule
\textbf{Method}
& \textbf{GSM8K}
& \textbf{MATH}
& \textbf{SVAMP}
& \textbf{Minerva}
& \textbf{Gaokao}
& \textbf{Olympiad}
& \textbf{AMC23}
& \textbf{SAT-Math}
& \textbf{AVG.} \\
\midrule
$\mathcal{M}_T$
& 88.40 & 74.90 & 94.30 & 30.90 & 64.70 & 37.60 & 60.00 & 100.0 & 68.85 \\

$\mathcal{M}_S$
& 26.20 & 20.50 & 21.70 & 6.60 & 16.10 & 7.40 & 10.00 & 81.20 & 23.71 \\
\midrule
KD
& 80.10 & 61.00 & 84.30 & 26.50 & 51.90 & 23.90 & 27.50 & 40.60 & 49.48 \\

GKD
& 81.00 & 61.10 & 84.70 & 25.00 & 52.70 & 23.70 & 25.00 & 78.10 & 53.91 \\

DistilLLM
& 80.90 & 61.00 & 85.80 & 27.20 & 50.90 & 23.70 & 37.50 & 68.80 & 54.47 \\

ABKD
& 81.50 & 61.30 & 85.50 & 24.30 & 51.90 & 22.40 & 32.50 & 43.80 & 50.40 \\

DistillLM-2
& 81.90 & 61.50 & 85.00 & 23.90 & 54.00 & 24.40 & 30.00 & 65.60 & 53.29 \\
\midrule
\textbf{VCRD}
& 81.00 & 61.80 & 85.90 & 25.70 & 53.20 & 24.70 & 35.00 & 81.20 & \textbf{56.06} \\
\bottomrule
\end{tabular}

}
\end{table*}

\begin{table*}[t]
\centering
\caption{
Pass@1 results for distilling Qwen2.5-Math-7B-Instruct ($\mathcal{M}_T$) into Qwen2.5-Math-1.5B ($\mathcal{M}_S$) across seven math-reasoning benchmarks. AVG denotes the average score across all datasets.
}
\vspace{-0.5 em}
\label{tab:qwen25_math_full}

\resizebox{1\linewidth}{!}{%
\begin{tabular}{lccccccc|c}
\toprule
\multicolumn{9}{c}{\textbf{Qwen2.5-Math-7B-Inst ($\mathcal{M}_T$) $\rightarrow$ Qwen2.5-Math-1.5B ($\mathcal{M}_S$)}} \\
\midrule
\textbf{Method}
& \textbf{GSM8K}
& \textbf{MATH}
& \textbf{Gaokao}
& \textbf{Olympiad}
& \textbf{CMATH}
& \textbf{AIME24}
& \textbf{AMC23}
& \textbf{AVG.} \\
\midrule

$\mathcal{M}_T$
& 85.10 & 76.30 & 66.50 & 37.00 & 89.80 & 16.70 & 60.00 & 61.63 \\

$\mathcal{M}_S$
& 79.10 & 51.10 & 42.10 & 16.90 & 62.00 & 3.30 & 22.50 & 39.57 \\
\midrule

KD
& 85.50 & 73.70 & 62.90 & 32.40 & 88.20 & 10.00 & 47.50 & 57.17 \\

GKD
& 85.50 & 74.10 & 60.00 & 33.30 & 88.80 & 6.70 & 50.00 & 56.91 \\

DistilLLM
& 84.90 & 74.40 & 60.30 & 33.50 & 89.30 & 13.30 & 47.50 & 57.60 \\

ABKD
& 84.20 & 74.30 & 61.80 & 33.20 & 88.50 & 6.70 & 47.50 & 56.60 \\

DistillLM-2
& 85.70 & 73.20 & 61.60 & 34.10 & 89.70 & 13.30 & 47.50 & 57.87 \\
\midrule

\textbf{VCRD}
& 85.00 & 73.70 & 62.10 & 35.40 &90.20& 13.30& 60.00 & \textbf{59.96} \\

\bottomrule
\vspace{-2 em}
\end{tabular}
}
\vspace{-0 em}
\end{table*}
\subsection{Code Generation}
\textbf{Setup.}
For code generation, we evaluate VCRD on HumanEval, HumanEval+~\citep{chen2021evaluating}, MBPP, and MBPP+~\citep{austin2021program}.
Distillation is conducted using WizardCoder prompts~\citep{luowizardcoder} under two teacher–student
configurations: Qwen2.5-Coder-7B-Instruct~\citep{hui2024qwen2}$\rightarrow$Qwen2.5-Coder-1.5B and
Qwen2.5-Coder-14B-Instruct$\rightarrow$Qwen2.5-Coder-7B-Instruct.
This setup enables evaluation across both standard benchmarks and
different-scale teacher–student pairs.\\
\textbf{Results.}
Table~\ref{tab:heval_mbpp_qwen25_all} shows that VCRD consistently improves student code-generation performance across both Qwen2.5-Coder distillation settings. In the 7B$\rightarrow$1.5B configuration, VCRD attains an average Pass@1 of 67.72, outperforming all baselines. Even with a larger-scale teacher-student pair (14B$\rightarrow$7B), VCRD reaches an average of 81.75 and achieving 83.5 on HumanEval+ compared to 82.3 compared to the best baseline. Although code generation is more deterministic than mathematical reasoning, VCRD still yields measurable gains, indicating that prefix-conditioned validity weighting reliably improves student updates even in structured synthesis tasks.

\begin{table*}[t]
\centering

\caption{
Pass@1 results on HumanEval and MBPP benchmarks for two Qwen2.5-Coder distillation settings.
AVG denotes the average Pass@1 across the four tasks.
}
\label{tab:heval_mbpp_qwen25_all}
\resizebox{0.7\linewidth}{!}{%
\begin{tabular}{lcccc|c}
\toprule
\textbf{Method}
& \textbf{HEval}
& \textbf{HEval$+$}
& \textbf{MBPP}
& \textbf{MBPP$+$}
& \textbf{AVG} \\
\midrule
\multicolumn{6}{c}{\textbf{Qwen2.5-Coder-7B-Inst ($\mathcal{M}_T$) $\rightarrow$ Qwen2.5-Coder-1.5B ($\mathcal{M}_S$)}} \\
\midrule
$\mathcal{M}_T$
& 91.50 & 86.00 & 82.00 & 70.40 & 82.47 \\

$\mathcal{M}_S$
& 69.50 & 64.00 & 73.00 & 61.40 & 66.97 \\

\midrule
KD
& 68.90 & 61.00 & 72.80 & 61.90 & 66.15 \\

GKD
& 68.30 & 62.80 & 73.30 & 63.20 & 66.90 \\

DistilLLM
& 68.30 & 62.80 & 73.30 & 62.70 & 66.77 \\

ABKD
& 69.50 & 64.00 & 71.20 & 60.60 & 66.32 \\

Distillm-2
& 67.70 & 61.60 & 73.80 & 63.20 & 66.57 \\

\textbf{VCRD}
& 69.50 & 64.60 & 73.80 & 63.00 & \textbf{67.72} \\

\midrule
\multicolumn{6}{c}{\textbf{Qwen2.5-Coder-14B-Inst ($\mathcal{M}_T$) $\rightarrow$ Qwen2.5-Coder-7B-Inst ($\mathcal{M}_S$)}} \\
\midrule
$\mathcal{M}_T$
& 91.50 & 86.60 & 85.40 & 72.80 & 84.07 \\

\midrule
Distillm
& 87.80 & 82.30 & 82.50 & 70.40 & 80.75 \\

Distillm-2
& 88.40 & 82.30 & 83.90 & 70.90 & 81.37 \\

\textbf{VCRD}
& 89.00 & 83.50 & 83.30 & 71.20 & \textbf{81.75} \\

\bottomrule
\end{tabular}
}
\end{table*}

\subsection{General Instruction-Following}
\paragraph{Setup.}
For instruction following, we evaluate VCRD under two teacher–student
configurations (Qwen2.5-7B-Instruct$\rightarrow$Qwen2.5-1.5B and
Qwen2-7B-Instruct$\rightarrow$Qwen2-1.5B) using distillation on UltraChat200k~\citep{ding2023enhancing}.
Performance is measured via win-rate on AlpacaEval~\citep{li2023alpacaeval}, Evol-Instruct~\citep{xu2024wizardlm}, and
UltraFeedback~\citep{li2023alpacaeval}, using GPT-4o or GPT-4o-mini as an LLM-as-a-Judge~\citep{zheng2023judging} following prior
work~\citep{kodistillm}.
\vspace{-1em}
\paragraph{Results.}
Table~\ref{tab:inst_wr} shows that VCRD yields consistent improvements over existing distillation approaches across both Qwen2.5 and Qwen2 instruction-following settings. In the Qwen2.5-7B$\rightarrow$1.5B configuration, VCRD achieves an average win rate of 66.50, surpassing the strongest baseline DistilLLM (65.47) and improving performance across all three benchmarks, for example, raising Evol-Instruct from 55.4 to 57.8. Even larger gains are observed in the Qwen2-7B$\rightarrow$1.5B setup, where VCRD attains an average win rate of 52.96, outperforming DistillLM-2 (50.61) by +2.35 and substantially improving Evol-Instruct (44.5 vs.\ 37.3). These results indicate that local-validity calibration is particularly effective for instruction-following tasks, where prefixes are often under-specified and teacher trajectories exhibit substantial variability. 

\begin{table*}[h!]
\centering
\caption{
Win Rate (WR\%) on instruction-following datasets for
Qwen2.5-7B-Instruct $\rightarrow$ Qwen2.5-1.5B and 
Qwen2-7B-Instruct $\rightarrow$ Qwen2-1.5B.
AVG is the average WR across all benchmarks.}
\label{tab:inst_wr}

\resizebox{1\linewidth}{!}{%
\begin{tabular}{l|ccc|c|ccc|c}
\toprule
\multirow{2}{*}{Method} 
& \multicolumn{4}{c|}{\textbf{Qwen2.5-7B-Inst ($\mathcal{M}_T$) $\rightarrow$ Qwen2.5-1.5B ($\mathcal{M}_S$)}} 
& \multicolumn{4}{c}{\textbf{Qwen2-7B-Inst ($\mathcal{M}_T$) $\rightarrow$ Qwen2-1.5B ($\mathcal{M}_S$)}} \\
\cmidrule(lr){2-5} \cmidrule(lr){6-9}
& AlpacaEval & Evol-Inst & UltraFeed & AVG.
& AlpacaEval & Evol-Inst & UltraFeed & AVG. \\
\midrule

$\mathcal{M}_T$ 
& 91.43 & 69.67 & 77.16 & 79.42
& 88.54 & 59.75 & 64.86 & 71.05 \\

$\mathcal{M}_S$
& 60.09 & 30.73 & 42.94 & 44.59
& 56.11 & 19.49 & 38.20 & 37.93 \\
\midrule

KD 
& 72.14 & 54.59 & 62.60 & 63.11
& 60.81 & 32.45 & 49.25 & 47.50 \\

SeqKD 
& 67.92 & 46.33 & 52.89 & 55.71
& 56.58 & 28.32 & 39.10 & 41.33 \\

DistilLLM 
& 75.11 & 55.38 & 65.92 & 65.47
& 64.13 & 35.66 & 51.26 & 50.35 \\

DistillLM-2
& 73.66 & 56.77 & 65.57 & 65.33
& 62.58 & 37.27 & 51.97 & 50.61 \\
\midrule

\textbf{VCRD}
& 76.02 & 57.80 & 65.67 & \textbf{66.50}
& 62.83 & 44.49 & 51.56 & \textbf{52.96} \\
\bottomrule
\vspace{- 2em}
\end{tabular}
}

\end{table*}
\subsection{Critical Role of Amplification}
\label{amplific}

\begin{wrapfigure}{r}{0.48\textwidth}
  \centering
  \vspace{-6em}\includegraphics[width=0.46\textwidth]{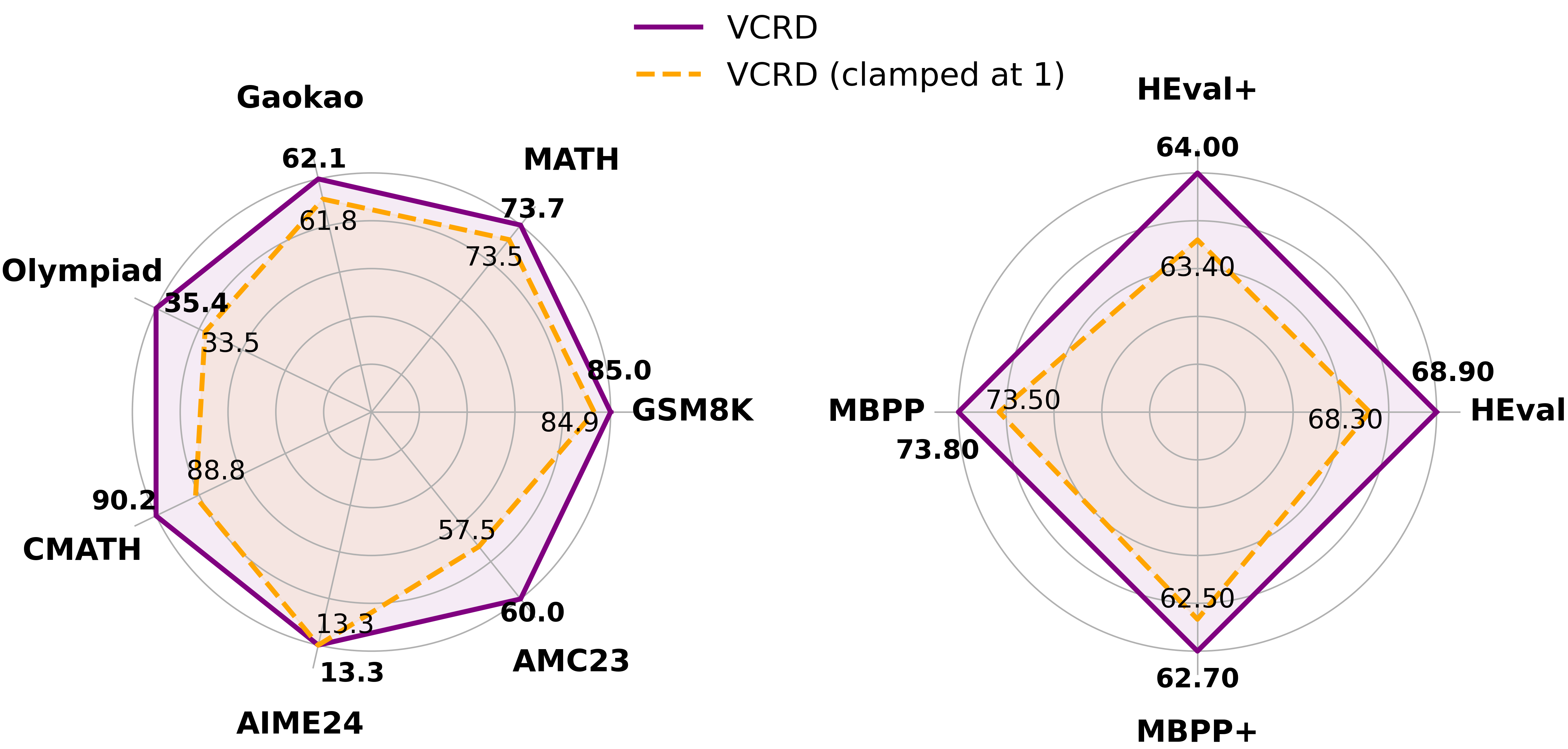}
  \caption{Qwen2.5-Math-7B-Inst$\rightarrow$Qwen2.5-Math-1.5B (left) and Qwen2.5-Coder-7B-Inst$\rightarrow$Qwen2.5-Coder-1.5B (right).}
  \label{amplif}
  \vspace{-2em}
\end{wrapfigure}

A key consequence of validity-calibrated supervision is the emergence of an
\emph{amplification} regime, in which the distillation update is strengthened
when the student’s locally proposed step is judged more valid than the
teacher’s under the same prefix.
This behavior directly follows from the breakdown of the monotonicity
assumption: global teacher superiority does not guarantee locally optimal
reasoning decisions at every step.
To isolate the role of amplification, Figure~\ref{amplif} compares full VCRD
against a variant in which amplification is disabled by clamping all validity-based weights to~1, thereby enforcing uniform update strength.
Across both mathematical reasoning (left) and code generation (right), the full
VCRD consistently outperforms the clamped variant, demonstrating that suppressing
locally superior student decisions leads to systematic degradation.

The performance gap is most pronounced on more challenging benchmarks such as
AMC23, AIME24, and HumanEval+, where intermediate reasoning steps are noisier and
locally under-specified.
In these regimes, positive reinforcement of locally well-justified student
moves plays a critical role in stabilizing learning.
These results confirm that amplification is not a heuristic addition, but a
necessary component of VCRD when local learning
signal is not ordered by global model quality.

\subsection{Ablation Study}
\label{ablation}
Table~\ref{tab:loss_component_ablation} compares VCRD, our proposed method, with constant-weight counterparts that use the same loss components but remove validity modulation. Here, LV-SKL ($w^T=1$) corresponds to teacher-prefix distillation with uniform weights, LV-SRKL ($w^S=1$) corresponds to pure on-policy distillation with student rollouts and per-token skewed reverse-KL supervision, and LV-SKL+LV-SRKL ($w^T=w^S=1$) corresponds to mixed-prefix distillation with uniform weights.
Across both mathematical reasoning and code generation, validity calibration substantially improves over the corresponding constant-weight objectives. Compared with pure on-policy (LV-SRKL ($w^S=1$)) distillation, VCRD improves the average score by $+3.72$ points on math and by $+2.27$ points on code. These gains show that the improvement is not simply due to using student rollouts, teacher prefixes, or mixed-prefix training. Instead, the improvement comes from calibrating the strength of the distillation update according to the local validity of teacher and student continuations. 

\begin{table*}[t]
\centering
\caption{
Contribution of validity calibration across mathematical reasoning and code generation.
SRKL ($w^S=1$) denotes pure on-policy distillation with student rollouts, per-token reverse-KL supervision, and no validity modulation.
SKL+SRKL ($w^T=w^S=1$) is the constant-weight mixed-prefix baseline.
}
\label{tab:loss_component_ablation}
\small

\resizebox{\linewidth}{!}{%
\begin{tabular}{lccccccc|c}
\toprule
\multicolumn{9}{c}{\textbf{ Qwen2.5-Math-7B-Instruct ($\mathcal{M}_T$) $\rightarrow$ Qwen2.5-Math-1.5B ($\mathcal{M}_S$)}} \\
\midrule
\textbf{Method}
& \textbf{GSM8K}
& \textbf{MATH}
& \textbf{Gaokao}
& \textbf{Olympiad}
& \textbf{CMATH}
& \textbf{AIME24}
& \textbf{AMC23}
& \textbf{AVG} \\
\midrule
LV-SKL ($w^T=1$)
& 84.50 & 73.80 & 62.30 & 34.70 & 86.30 & 6.70 & 47.50 & 56.54 \\

LV-SRKL ($w^S=1$)
& 85.70 & 72.90 & 59.00 & 33.90 & 88.00 & 6.70 & 47.50 & 56.24 \\

LV-SKL+LV-SRKL ($w^T=w^S=1$)
& 85.50 & 73.20 & 61.30 & 34.10 & 90.20 & 13.30 & 47.50 & 57.87 \\

\midrule
LV-SKL
& 83.70 & 73.80 & 62.60 & 34.70 & 87.50 & 10.10 & 60.00 & 58.91 \\

LV-SRKL
& 86.60 & 72.80 & 61.00 & 35.10 & 88.70 & 10.00 & 42.50 & 56.67 \\

\textbf{VCRD}
& 85.00 & 73.70 & 62.10 & 35.40 & 90.20 & 13.30 & 60.00 & \textbf{59.96} \\
\bottomrule
\end{tabular}
}

\vspace{0.8em}

\resizebox{0.78\linewidth}{!}{%
\begin{tabular}{lcccc|c}
\toprule
\multicolumn{6}{c}{\textbf{ Qwen2.5-Coder-7B-Instruct ($\mathcal{M}_T$) $\rightarrow$ Qwen2.5-Coder-1.5B ($\mathcal{M}_S$)}} \\
\midrule
\textbf{Method}
& \textbf{HumanEval}
& \textbf{HumanEval$+$}
& \textbf{MBPP}
& \textbf{MBPP$+$}
& \textbf{AVG} \\
\midrule
LV-SKL ($w^T=1$)
& 66.50 & 60.40 & 73.50 & 63.0 & 65.85 \\

LV-SRKL ($w^S=1$)
& 65.90 & 60.40 & 73.30 & 62.20 & 65.45 \\

LV-SKL+LV-SRKL ($w^T=w^S=1$)
& 64.60 & 59.10 & 72.80 & 62.20 & 64.67 \\

\midrule
LV-SKL
& 68.90 & 62.80 & 73.00 & 63.00 & 66.92 \\

LV-SRKL
& 67.10 & 59.80 & 74.30 & 63.20 & 66.10 \\

\textbf{VCRD}
& 69.50 & 64.60 & 73.80 & 63.00 & \textbf{67.72} \\
\bottomrule
\vspace{-4 em}
\end{tabular}
}

\end{table*}
\subsection{PRM‑Free Validity Approximation} 
In many practical distillation scenarios, most notably for models such as DeepSeek, no pretrained PRM is available to provide token‑level validity scores. To extend VCRD to this setting while remaining aligned with our theoretical framework, we replace the external judge with a \emph{teacher‑likelihood proxy}: at each prefix, we compare the probability that the teacher assigns to the student’s proposed next token against a temperature‑softened teacher baseline. 
\begin{wraptable}{r}{0.48\columnwidth}

\centering
\vspace{-1em}
\caption{
Pass@1 on HumanEval for DS-Coder-6.9B $\rightarrow$ DS-Coder-1.3B.
}
\label{tab:heval_mbpp_ds_reduced}
\resizebox{0.47\columnwidth}{!}{%
\begin{tabular}{lcc|c}
\toprule
\textbf{Method}
& \textbf{HEval}
& \textbf{HEval$+$}
& \textbf{AVG} \\
\midrule

$\mathcal{M}_T$
& 77.40 & 70.70 & 74.05 \\

$\mathcal{M}_S$
& 35.40 & 29.30 & 32.35 \\
\midrule

KD
& 42.10 & 37.20 & 39.65 \\

SeqKD
& 41.50 & 36.00 & 38.75 \\

GKD
& 40.09 & 36.00 & 38.05 \\

DistilLLM
& 42.10 & 38.40 & 40.25 \\

ABKD
& 42.10 & 37.80 & 39.95 \\

Distillm-2
& 42.70 & 38.40 & 40.55 \\
\midrule

\textbf{VCRD-Prob}
& \textbf{44.50} & \textbf{40.90} & \textbf{42.70} \\
\bottomrule
\end{tabular}
}
\vspace{-1em}
\end{wraptable}
This probability‑ratio weight mirrors the structure of the trust‑region analysis in Section~\ref{sec:theory}, where the ratio \( r_s / r_t + \epsilon \) acts as a first‑order estimator of the local improvement direction implied by the exponentially tilted solution. Because the teacher action is sampled rather than taken greedily, this proxy naturally preserves the amplification regime (\(w_t>1\)), capturing the theoretical phenomenon of local under‑specification, cases in which the student proposes a more locally coherent continuation than the teacher’s sampled token. Empirically, this PRM‑free variant of VCRD behaves consistently with the PRM‑based method: on coding (Table \ref{tab:heval_mbpp_ds_reduced}) VCRD raises the average HumanEval/HEval+ Pass@1 to 42.7, surpassing all baselines by +2.1–4.6 points; and in math reasoning (Table \ref{tab:math_prmfree}), it outperforms Distillm-2. These results show that even without an explicit reward model, VCRD’s core principle, allocating learning signal by relative local validity, remains intact: weak student moves are attenuated, strong ones are amplified when the teacher sample is locally suboptimal, and the KL anchoring preserves stable convergence within the teacher’s solution manifold.
\begin{table}[t]
\centering
\caption{Pass@1 results on four math-reasoning benchmarks. AVG is the mean across the four tasks. VCRD-Prob represents the PRM-free approach.  Qwen-2.5-Math-7B-Instruct\(\rightarrow\)Qwen-2.5-Math-1.5B}
\label{tab:math_prmfree}
\resizebox{0.75\linewidth}{!}{%
\begin{tabular}{lcccc|c}
\toprule
\textbf{Method}
& GSM8K & MATH & Gaokao & Olympiad & AVG \\
\midrule

Distillm-2
& 85.70 & 73.20 & 61.60 & 34.10 & 63.65 \\
\textbf{VCRD-Prob}

& 84.80 & 73.90 & 64.20 & 33.60 & \textbf{64.13} \\
\bottomrule
\vspace{-2.5 em}
\end{tabular}
}
\end{table}

\section{Concluding Remarks}
This work revisits LLM reasoning distillation from a foundational perspective.
We show that prevailing trajectory-based approaches rely on an implicit
monotonicity assumption: that global teacher superiority induces uniformly
reliable local learning signals. This assumption is misaligned with multi-step
reasoning, where intermediate steps are often under-specified and the usefulness
of supervision can vary substantially across decision points. Motivated by this
insight, we introduced \textbf{Validity-Calibrated Reasoning Distillation
(VCRD)}, which reframes reasoning distillation as token-level learning-signal
calibration rather than trajectory imitation. By comparing the relative local
validity of teacher and student proposals under shared prefixes, VCRD adaptively
modulates the \emph{strength} of KL-based updates while preserving
teacher-anchored optimization geometry. This enables principled attenuation in
weakly informative regions and amplification when strong local reasoning evidence
is present, without changing the direction of supervision or departing from the
teacher’s solution manifold. Empirically, VCRD yields consistent improvements across mathematical reasoning,
code generation, and instruction-following benchmarks, including settings with
strong teachers and diverse solution spaces. Ablation studies further confirm the
importance of validity-based calibration and prefix-level supervision, aligning
empirical behavior with the theoretical motivation. More broadly, our results
suggest that effective reasoning distillation requires calibrating \emph{how
strongly} models learn from intermediate steps, rather than enforcing uniform
imitation of a single reasoning trajectory. VCRD currently uses PRMs to estimate token-level local validity. Although PRMs
provide a natural way to compare teacher and student continuations under a shared
prefix, their availability and calibration remain limited across model families
and reasoning domains. To reduce this dependence, we introduced a PRM-free
variant that uses a teacher-likelihood proxy in place of the external judge,
preserving the same principle of relative local calibration without requiring an
explicit reward model. Preliminary results
show that this variant continues to outperform strong distillation baselines,
suggesting that VCRD’s core mechanism is not tied to a specific PRM. At the same
time, teacher likelihood remains an imperfect surrogate for local validity and
may overlook important aspects of step-level reasoning quality. Developing
general, well-calibrated, and domain-agnostic validity estimators therefore
remains an important direction for future work.

\bibliography{neurips_2026}
\bibliographystyle{plainnat}

\newpage

\section{Qualitative Examples}
\label{app:student_better_examples}
\vspace{-2 em}
\begin{table*}[ht]
\centering
\caption{
Qualitative examples where the student Qwen2.5-Math-1.5B continuation receives higher local validity than the teacher Qwen2.5-Math-7B-Instruct continuation under the same prefix.
$r_T$ and $r_S$ denote the PRM (Skywork-o1-OpenPRM-Qwen-2.5-1.5B) rewards for the teacher and student \textbf{first token} continuations, respectively, and the ratio is $r_S/r_T$.
The ``Better Local Continuation'' column indicates which continuation is locally correct or a valid better alternative under the shared prefix between teacher and student models.
}
\label{tab:student_better_numeric_examples}
\scriptsize
\resizebox{\linewidth}{!}{%
\begin{tabular}{p{0.28\linewidth} p{0.16\linewidth} p{0.16\linewidth} ccc p{0.18\linewidth}}
\toprule
\textbf{Shared prefix}
& \textbf{Teacher continuation}
& \textbf{Student continuation}
& $\boldsymbol{r_T}$
& $\boldsymbol{r_S}$
& \textbf{Ratio}
& \textbf{Better Local continuation} \\
\midrule

The cost of printing is $7$ copies $\times$ $25$ pages $\times$ \$0.10/page $= \$17$
& $550$
& $.50$
& 0.085
& 0.589
& 6.97
& \textbf{Student}. The correct amount is \$17.50; the student completes the decimal. \\
\midrule

Tim buys 3 goats for \$400 each. Tim also buys twice as many llamas as goats, so he buys
& $2 \times 2 = 6$ llamas
& $3 \times 2 = 6$ llamas
& 0.152
& 0.847
& 5.57
& \textbf{Student}. The multiplier should be the number of goats, 3, so $3 \times 2 = 6$. \\

\midrule

Cost per liter is \$3 and the pool volume is $25 \times 60x$. Thus, $3 \times 25 \times 60x = 90000$. Simplify: $3 \times$
& $15 \times 60x$
& $25 \times 60x$
& 0.575
& 0.801
& 1.39
& \textbf{Student}. The coefficient 25 should be preserved from $3 \times 25 \times 60x$. \\

\midrule

Find $1^{234} + 4^6 \div 4^4$. First, we simplify
& $1^6 \div 4^4$
& $4^6 \div 4^4$
& 0.601
& 0.877
& 1.46
& \textbf{Student}. The expression being simplified is $4^6 \div 4^4$, not $1^6 \div 4^4$. \\

\midrule

James drives at 30 mph for half an hour. Then he drives at twice the speed for twice as long, which means he drives at a speed of
& $3 \times 30 = 6\cdots$
& $2 \times 30 = 6\cdots$
& 0.360
& 0.827
& 2.30
& \textbf{Student}. ``Twice the speed'' requires multiplying by 2, giving $2 \times 30 = 60$ mph. \\

\bottomrule
\end{tabular}
}
\end{table*}
\section{Related Work} \label{rl}

\textbf{Reasoning.}  
Recent advances in large language model reasoning have been driven by explicit
multi-step supervision techniques, including chain-of-thought prompting
\citep{wei2022chain}, structured reasoning templates
\citep{chenglin2024mixed,zhu2024distilling}, thought-expansion methods such as
Tree-of-Thought and related variants \citep{yao2023tree}, and other forms of
guided step decomposition \citep{shridhar2023distilling}. These approaches
substantially improve the reasoning performance of large models but typically
depend on scale and heavy computation, limiting their applicability in
resource-constrained settings. This has motivated growing interest in enhancing the reasoning abilities of
smaller models through two main directions: (i) richer training-time supervision
schemes, such as self-correction \citep{madaan2023self} and preference-based
learning \citep{yangsupercorrect}, and (ii) knowledge distillation
\citep{schmidhuber1992learning,hinton2015distilling}, where a strong teacher
transfers its reasoning behavior to a compact student via teacher-generated
traces \citep{li2022explanations,magister2023teaching,liu2024deepseek}.

\textbf{Reasoning Distillation.}  
Reasoning distillation has emerged as an effective approach for transferring
chain-of-thought capabilities from larger to smaller language models
\citep{mukherjee2023orca,hsieh2023distilling,mitra2024orca,fu2023specializing},
allowing students to inherit strong reasoning behavior without costly training
from scratch. Early work framed this as rationale imitation
\citep{ho2023large,liu2024deepseek}, training students to reproduce teacher
explanations using cross-entropy or sequence-level KD
\citep{hinton2015distilling,kim-rush-2016-sequence}. Subsequent work extends
this paradigm with trajectory selection and filtering
\citep{zelikman2022star,lightman2023let,wangself}, keeping only high-quality or
self-consistent traces, and with alternative objectives such as on-policy
distillation using student rollouts \citep{agarwal2024policy,ko2024distillm} or
contrastive losses that balance teacher-guided and student-generated supervision
\citep{kodistillm}. Despite this progress, existing techniques largely treat reasoning trajectories
as monolithic sequences: implicitly assuming that all teacher steps are equally informative or
equally authoritative. This creates two limitations: (i) \emph{local
under-specification}, many reasoning prefixes admit multiple valid
continuations, yet sequence-level KD enforces a single teacher choice; and (ii)
\emph{uniform optimization}, the student receives identical learning signal on
strong teacher steps, weak teacher steps, and student-favored steps. In
contrast, our method adopts a token-level, prefix-conditioned view of reasoning
distillation.
\section{Theoretical Results and Proofs}
\label{appendA}

This appendix provides formal statements and proofs supporting the theoretical
claims in Section~\ref{sec:theory}. Throughout, we consider a fixed prefix
(context) \(c\) and a finite vocabulary \(\mathcal{V}\). The teacher policy is
denoted by \(\pi(\cdot\mid c)\), and \(r(c,a)\) denotes a bounded local validity
signal.

\subsection{Teacher-anchored KL trust-region improvement (detailed proof)}
\label{subse1}
\begin{theorem}[Teacher-anchored reverse-KL trust-region solution]
\label{thm:exp_tilt_fixed}
Fix a prefix \(c\) and a teacher policy \(\pi(\cdot\mid c)\) over a finite set \(\mathcal{V}\),
with \(\pi(a\mid c)>0\) for all \(a\in\mathcal{V}\). Let \(r(c,a)\in\mathbb{R}\) be bounded.
Consider the problem
\begin{equation}
\label{eq:append_trust_region_fixed}
\max_{\tilde{\pi}(\cdot\mid c)}\;
\sum_{a\in\mathcal{V}}\tilde{\pi}(a\mid c)\, r(c,a)
\quad\text{s.t.}\quad
\mathrm{KL}\!\left(\tilde{\pi}(\cdot\mid c)\,\|\,\pi(\cdot\mid c)\right)\le \delta,
\ \ \sum_{a}\tilde{\pi}(a\mid c)=1,\ \ \tilde{\pi}(a\mid c)>0 .
\end{equation}
Then any optimal solution \(\tilde{\pi}^\star(\cdot\mid c)\) has the form
\begin{equation}
\label{eq:append_exp_tilt_fixed}
\tilde{\pi}^\star(a\mid c)
=
\frac{\pi(a\mid c)\exp\!\big(\eta\,r(c,a)\big)}
{\sum_{a'\in\mathcal{V}} \pi(a'\mid c)\exp\!\big(\eta\,r(c,a')\big)}
\end{equation}
for some \(\eta\ge 0\) chosen such that the KL constraint is satisfied
(with equality unless it is inactive).
\end{theorem}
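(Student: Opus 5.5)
We will prove the claim by a Gibbs-variational (Donsker--Varadhan) argument rather than a raw Lagrangian computation, since the strict positivity constraint $\tilde\pi>0$ makes the feasible set non-closed and a direct KKT treatment at the boundary is awkward. For $\eta\ge 0$ write $Z(\eta)=\sum_a\pi(a\mid c)e^{\eta r(c,a)}$ and $\pi_\eta(a\mid c)=\pi(a\mid c)e^{\eta r(c,a)}/Z(\eta)$. The basic identity, valid for every distribution $\tilde\pi$ on $\mathcal V$, is
\begin{equation*}
\eta\,\mathbb{E}_{\tilde\pi}[r]-\mathrm{KL}(\tilde\pi\,\|\,\pi)=\log Z(\eta)-\mathrm{KL}(\tilde\pi\,\|\,\pi_\eta),
\end{equation*}
obtained by expanding $\log(\tilde\pi/\pi_\eta)=\log(\tilde\pi/\pi)-\eta r+\log Z(\eta)$. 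Because $\mathcal V$ is finite and $r$ is bounded, $Z(\eta)$ is finite and $\pi_\eta$ has full support, so it is feasible with respect to positivity.

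Next we study the map $\eta\mapsto k(\eta):=\mathrm{KL}(\pi_\eta\|\pi)$. We have $k(0)=0$, and $k$ is continuous. If $r(c,\cdot)$ is not constant on $\mathcal V$, then $k$ is strictly increasing, since $k'(\eta)=\eta\,\mathrm{Var}_{\pi_\eta}(r)>0$ for $\eta>0$. Its limit as $\eta\to\infty$ is $-\log\pi(\mathcal A^\star\mid c)$, where $\mathcal A^\star=\arg\max_a r(c,a)$. We split into cases.

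\textbf{Inactive case.} Suppose $r$ is constant, or $\delta\ge -\log\pi(\mathcal A^\star\mid c)$. If $r$ is constant, every feasible $\tilde\pi$ attains the same objective value. We note that in this case the theorem's claim "any optimal solution has the form" holds only with $\eta=0$ up to the degenerate constant-$r$ situation, which we flag and exclude by assuming $r$ is nonconstant. In the second subcase the supremum $\max_a r$ is approached only by placing all mass on $\mathcal A^\star$, which violates strict positivity, so no optimizer exists and the statement holds vacuously. We expect this to be the fiddly part: making precise exactly when an optimum exists, given the strict inequality $\tilde\pi>0$.

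\textbf{Active case.} Suppose $0<\delta<-\log\pi(\mathcal A^\star\mid c)$ and $r$ is nonconstant. By the intermediate value theorem and monotonicity of $k$, there is a unique $\eta^\star>0$ with $k(\eta^\star)=\delta$. For any feasible $\tilde\pi$, the identity at $\eta=\eta^\star$ gives
\begin{equation*}
\eta^\star\,\mathbb{E}_{\tilde\pi}[r]\le \log Z(\eta^\star)+\mathrm{KL}(\tilde\pi\|\pi)-\mathrm{KL}(\tilde\pi\|\pi_{\eta^\star})\le \log Z(\eta^\star)+\delta-\mathrm{KL}(\tilde\pi\|\pi_{\eta^\star}).
\end{equation*}
Equality is attained by $\tilde\pi=\pi_{\eta^\star}$, since applying the identity to $\pi_{\eta^\star}$ yields $\eta^\star\mathbb{E}_{\pi_{\eta^\star}}[r]=\log Z(\eta^\star)+\delta$. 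Hence any feasible $\tilde\pi$ that is also optimal must satisfy $\mathrm{KL}(\tilde\pi\|\pi_{\eta^\star})\le 0$, which forces $\tilde\pi=\pi_{\eta^\star}$. This gives both existence and the uniqueness claim "any optimal solution", with the constraint holding with equality.

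Finally, the case $\delta=0$ forces $\tilde\pi=\pi$, which is the case $\eta=0$. This completes the characterization stated in the theorem.
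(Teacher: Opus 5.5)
Your argument is correct, and it takes a genuinely different route from the paper. The paper writes the Lagrangian, imposes stationarity at a given interior optimum, solves for $\tilde\pi$ with $\eta=1/\lambda$, and appeals to complementary slackness. That gives the tilted form quickly as a necessary condition. However, it silently relies on the existence of KKT multipliers, which needs a constraint qualification such as Slater's (available for $\delta>0$ via $\tilde\pi=\pi$). Its last step is also wrong: with $\eta=1/\lambda$, the case $\lambda=0$ does not correspond to $\eta=0$, and stationarity with $\lambda=0$ forces $r(c,\cdot)$ to be constant rather than forcing $\tilde\pi^\star=\pi$.

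You instead construct the candidate by inverting the strictly increasing map $k(\eta)=\mathrm{KL}(\pi_\eta\|\pi)$, then certify it with the exact Gibbs identity. This buys several things at once:
\begin{itemize}
\item existence, uniqueness, and activeness of the constraint in one stroke, with no constraint qualification needed;
\item an exact description of when an optimizer exists: for nonconstant $r$, precisely when $0\le\delta<-\log\pi(\mathcal{A}^\star\mid c)$;
\item as a consequence, for nonconstant $r$ the KL constraint is never inactive at an optimum.
\end{itemize}
The cost is the extra analysis of $k$ (its derivative $\eta\,\mathrm{Var}_{\pi_\eta}(r)$ and its limit), which the Lagrangian route avoids by leaving $\eta$ implicit.

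Two things to tighten. First, in the constant-$r$ case, do not say the claim holds with $\eta=0$. For $\delta>0$ and $|\mathcal{V}|\ge 2$, every feasible $\tilde\pi$ is optimal, while the only tilted distribution is $\pi$ itself. So the statement as written is false there, and your exclusion is necessary rather than cosmetic; this is exactly the case the paper's $\lambda=0$ step mishandles.

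Second, in the subcase $\delta\ge-\log\pi(\mathcal{A}^\star\mid c)$, make the supremum argument explicit:
\begin{itemize}
\item each $\pi_\eta$ is feasible, because $k(\eta)<-\log\pi(\mathcal{A}^\star\mid c)\le\delta$;
\item $\mathbb{E}_{\pi_\eta}[r]\to\max_a r(c,a)$ as $\eta\to\infty$;
\item every full-support $\tilde\pi$ has $\mathbb{E}_{\tilde\pi}[r]<\max_a r(c,a)$, since $r$ is nonconstant.
\end{itemize}
Hence the supremum is not attained.
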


\begin{proof}
Fix \(c\) and abbreviate \(\pi(a)=\pi(a\mid c)\), \(\tilde{\pi}(a)=\tilde{\pi}(a\mid c)\),
and \(r(a)=r(c,a)\). The constraint is
\[
\mathrm{KL}(\tilde{\pi}\|\pi)
=
\sum_{a\in\mathcal{V}}\tilde{\pi}(a)\log\frac{\tilde{\pi}(a)}{\pi(a)}
\le \delta.
\]
Form the Lagrangian with multipliers \(\lambda\ge 0\) (KL constraint) and \(\nu\in\mathbb{R}\)
(normalization):
\begin{align}
\mathcal{L}(\tilde{\pi},\lambda,\nu)
&=
\sum_a \tilde{\pi}(a)\,r(a)
-\lambda\Big(\sum_a \tilde{\pi}(a)\log\frac{\tilde{\pi}(a)}{\pi(a)}-\delta\Big)
+\nu\Big(\sum_a \tilde{\pi}(a)-1\Big).
\end{align}
At an interior optimum \(\tilde{\pi}(a)>0\), stationarity gives for each \(a\):
\begin{align}
0
=
\frac{\partial \mathcal{L}}{\partial \tilde{\pi}(a)}
&=
r(a)
-\lambda\Big(\log\frac{\tilde{\pi}(a)}{\pi(a)}+1\Big)
+\nu .
\end{align}
Rearrange:
\[
\log\frac{\tilde{\pi}(a)}{\pi(a)}
=
\frac{1}{\lambda}\big(r(a)+\nu'\big),
\qquad
\nu'=\nu-\lambda.
\]
Let \(\eta = 1/\lambda \ge 0\). Exponentiating yields
\[
\tilde{\pi}(a)=\pi(a)\exp(\eta r(a))\exp(\eta\nu').
\]
Enforcing \(\sum_a \tilde{\pi}(a)=1\) determines the normalizer:
\[
\exp(\eta\nu')
=
\Big(\sum_{a'}\pi(a')\exp(\eta r(a'))\Big)^{-1}.
\]
Substituting back gives the exponential-tilt form in
Eq.~\eqref{eq:append_exp_tilt_fixed}. Finally, by complementary slackness,
\(\lambda=0\) (equivalently \(\eta=0\)) occurs only when the KL constraint is inactive,
in which case \(\tilde{\pi}^\star=\pi\); otherwise the constraint is active and \(\eta>0\)
is chosen so that \(\mathrm{KL}(\tilde{\pi}^\star\|\pi)=\delta\).
\end{proof}

\subsection{From optimal improvement to first-order learning-signal allocation}
\label{appenda2}
\paragraph{Derivation of Eq.~\eqref{eq:weighted_update}.}
Fix a prefix \(c_t\). Consider the locally weighted forward-KL loss
\begin{equation}
\mathcal{L}_t(\theta)
\;=\;
w(c_t)\,\mathrm{KL}\!\big(\pi(\cdot\mid c_t)\,\|\,\pi_\theta(\cdot\mid c_t)\big),
\label{eq:weighted_KL_loss}
\end{equation}
where \(\pi(\cdot\mid c_t)\) (teacher) and \(w(c_t)\) are treated as fixed with
respect to \(\theta\) (i.e., we do \(\texttt{stopgrad}\) on \(w\)).
A gradient descent step with step-size \(\eta>0\) yields
\begin{align}
\Delta \theta_t
&=
-\eta \,\nabla_\theta \mathcal{L}_t(\theta)
=
-\eta\, w(c_t)\, \nabla_\theta \mathrm{KL}\!\big(\pi(\cdot\mid c_t)\,\|\,\pi_\theta(\cdot\mid c_t)\big).
\label{eq:gd_step_weighted}
\end{align}
Absorbing the positive scalar \(\eta\) into the proportionality constant gives
\begin{equation}
\Delta\theta_t
\;\propto\;
w(c_t)\,
\nabla_\theta\Big[
-\,\mathrm{KL}\!\big(\pi(\cdot\mid c_t)\,\|\,\pi_\theta(\cdot\mid c_t)\big)
\Big],
\end{equation}
which is Eq.~\eqref{eq:weighted_update}.

\section{Algorithm}
In this section, we provide our algorithm~\ref{alg:vcrd}.
\label{appendB}
\begin{algorithm}[htb!]
\caption{Validity-Calibrated Reasoning Distillation (VCRD)}
\label{alg:vcrd}
\begin{algorithmic}[1]
\STATE \textbf{Input:} Dataset $\mathcal{D}=\{x_i\}$; teacher LM $p$; student LM $q_\theta$;
auxiliary judge $J$; loss weights $(\lambda_T,\lambda_S)$; smoothing $\varepsilon>0$;
rollout horizon $T$.
\STATE \textbf{Output:} Updated student parameters $\theta$.

\STATE \textbf{Local validity:} For any prefix $c$ and token $a$, define
$r(c,a) = J(c,a) \in [0,1]$.

\FOR{\textbf{each} training step}
    \STATE Sample minibatch $\mathcal{B}\subset\mathcal{D}$.
    \FOR{\textbf{each} input $x\in\mathcal{B}$}

        \STATE \textit{// Roll out teacher and student once}
        \STATE $y^T=(a^T_1,\dots,a^T_T)\sim p(\cdot\mid x)$.
        \STATE $y^S=(a^S_1,\dots,a^S_T)\sim q_\theta(\cdot\mid x)$.

        \STATE \textit{// Prefix definitions}
        \STATE $c^T_{t-1}=(x, a^T_{<t})$ and $c^S_{t-1}=(x, a^S_{<t})$ for $t=1,\dots,T$.

        \STATE \textit{// LV--SKL weights (teacher prefix)}
        \FOR{$t=1$ to $T$}
            \STATE $w^T_t \gets 
            \dfrac{
                r(c^T_{t-1}, a^S_t)
            }{
                r(c^T_{t-1}, a^T_t) + \varepsilon
            }$
        \ENDFOR

        \STATE \textit{// LV--SRKL weights (student prefix)}
        \FOR{$t=1$ to $T$}
            \STATE $w^S_t \gets 
            \dfrac{
                r(c^S_{t-1}, a^S_t)
            }{
                r(c^S_{t-1}, a^T_t) + \varepsilon
            }$
        \ENDFOR

        \STATE \textit{// Validity-weighted KL losses}
        \STATE $\displaystyle 
        \mathcal{L}_{\mathrm{LV\text{-}SKL}}(x)
        =
        \sum_{t=1}^{T} 
        w^T_t \,
        D^{(\alpha)}_{\mathrm{SKL}}\!\left(
            p(\cdot\mid c^T_{t-1})
            \,\big\|\,
            q_\theta(\cdot\mid c^T_{t-1})
        \right)$
        \STATE $\displaystyle
        \mathcal{L}_{\mathrm{LV\text{-}SRKL}}(x)
        =
        \sum_{t=1}^{T} 
        w^S_t \,
        D^{(\alpha)}_{\mathrm{SRKL}}\!\left(
            p(\cdot\mid c^S_{t-1})
            \,\big\|\,
            q_\theta(\cdot\mid c^S_{t-1})
        \right)$
    \ENDFOR

    \STATE \textit{// Batch aggregation and update}
    \STATE $\displaystyle 
    \mathcal{L} = \frac{1}{|\mathcal{B}|}
    \sum_{x\in\mathcal{B}}
    \Big[
        \lambda_T \, \mathcal{L}_{\mathrm{LV\text{-}SKL}}(x)
        + 
        \lambda_S \, \mathcal{L}_{\mathrm{LV\text{-}SRKL}}(x)
    \Big]$
    \STATE Update $\theta \leftarrow \theta - \eta\,\nabla_\theta \mathcal{L}$
\ENDFOR
\end{algorithmic}
\end{algorithm}

\section{Detailed Experimental Setup} \label{expsetup}
\subsection{Dataset Description}
\textbf{MetaMathQA} (mathematical reasoning;~\citet{yumetamath}\textsuperscript{1}): MetaMathQA is a large-scale dataset introduced to strengthen mathematical reasoning in language models. It is constructed through question bootstrapping, where each problem is rewritten from multiple perspectives, such as forward reasoning, backward reasoning, and alternative rephrasings, to expose models to diverse solution paths and intermediate reasoning structures.

\textbf{GSM8K} (mathematical reasoning;~\citet{cobbe2021training}\textsuperscript{2}): 
GSM8K consists of 8.5K high-quality grade-school math word problems that require multi-step numerical and logical reasoning. The dataset emphasizes clarity, linguistic diversity, and systematic solution steps, making it a standard benchmark for evaluating fundamental arithmetic reasoning in LLMs.

\textbf{MATH} (mathematical reasoning;~\citet{hendrycks2measuring}\textsuperscript{3}): 
The MATH dataset contains thousands of competition-style mathematical questions spanning algebra, geometry, combinatorics, number theory, probability, and more. It includes detailed step-by-step solutions generated from a procedural codebase, enabling rigorous evaluation of a model’s ability to perform symbolic and multi-step mathematical reasoning at a variety of difficulty levels.

\textbf{SVAMP} (mathematical reasoning;~\citet{patel2021nlp}\textsuperscript{4}): 
SVAMP is a variant of GSM8K designed to mitigate spurious pattern exploitation in arithmetic word problems. It introduces controlled perturbations, such as swapping quantities or modifying irrelevant details, to evaluate whether models rely on genuine mathematical reasoning rather than superficial cues.

\textbf{MinervaMath} (mathematical reasoning;~\citet{lewkowycz2022solving}\textsuperscript{5}): 
MinervaMath is a challenging benchmark derived from the Minerva project, focusing specifically on mathematical problem solving. It includes competition-style questions across algebra, calculus, geometry, number theory, and probability. Problems require multi-step symbolic manipulation and precise derivations, making MinervaMath a rigorous test of a model’s ability to handle advanced, long-chain mathematical reasoning.

\textbf{Gaokao2023-EN} (mathematical reasoning;~\citet{zhang2023evaluating}\textsuperscript{6}): 
Gaokao2023-EN contains English translations of math questions from the 2023 Chinese National College Entrance Examination (Gaokao). The dataset features concise, exam-style problems across algebra, geometry, trigonometry, and applied math. Its formulation emphasizes careful reading, multi-step reasoning, and robustness to linguistically minimal prompts, providing a strong evaluation of structured mathematical reasoning under realistic exam conditions.

\textbf{OlympiadBench} (mathematical reasoning;~\citet{he2024olympiadbench}\textsuperscript{7}): 
OlympiadBench aggregates Olympiad-style mathematical problems requiring multi-hop symbolic reasoning, pattern discovery, and structured derivations. Questions are high difficulty and typically require creative reasoning, making this benchmark sensitive to errors in intermediate steps.  

\textbf{AMC23} (mathematical reasoning;~\citet{aimo2024amc}\textsuperscript{8}): 
AMC23 contains problems from the 2023 American Mathematics Competition, focusing on algebra, geometry, combinatorics, and number theory at the mid-competition level. The dataset evaluates a model’s ability to navigate moderately challenging problems that require structured reasoning rather than memorized patterns.

\textbf{AIME24} (mathematical reasoning;~\citet{maa2025aime}\textsuperscript{9}): 
AIME24 consists of problems from the 2024 American Invitational Mathematics Examination. These questions demand multi-step derivations, precise algebraic manipulation, and careful numerical reasoning, providing a sensitive test of a model’s ability to avoid compounding local reasoning errors.

\textbf{SAT-Math} (mathematical reasoning;~\citet{zhong2024agieval}\textsuperscript{10}): 
SAT-Math evaluates models on algebra, arithmetic reasoning, function interpretation, and geometry tasks from the SAT exam. While less challenging than competition benchmarks, the dataset tests robustness under shorter, mixed-format reasoning questions.

\textbf{CMATH} (mathematical reasoning;~\citet{wei2023cmath}):
CMATH is a curated collection covering a broad set of competition-math problem types, with carefully structured reasoning paths and high-quality solutions. It includes tasks requiring symbolic manipulation, equation solving, and multi-step deductive reasoning.

\textbf{WizardCoder} (code generation;~\citet{luo2024wizardcoder}\textsuperscript{11}):
WizardCoder is constructed using the Evol-Instruct procedure, which automatically expands and refines existing code-instruction corpora. The process begins from CodeAlpaca (20K instructions) and iteratively applies instruction evolution techniques, adding constraints, increasing reasoning depth, introducing distractor code, and modifying specifications, to produce more challenging programming tasks. The resulting dataset contains roughly 78K evolved problems and serves as a large-scale code-instruction corpus used to finetune StarCoder and related models.

\textbf{HumanEval} (code generation;~\citet{chen2021evaluating}\textsuperscript{12}): 
HumanEval is a benchmark of 164 manually written Python programming problems, each comprising a function signature, natural-language description, and a set of unit tests. The tasks were explicitly created to avoid overlap with pretraining corpora, making HumanEval a standard benchmark for evaluating functional correctness of program synthesis.

\textbf{HumanEval+}:
 HumanEval+ extends HumanEval by providing perturbed, paraphrased, or structurally varied versions of the original problems. These variants preserve the underlying semantics while altering surface form, enabling a more robust evaluation of generalization and reasoning stability in code generation models.

\textbf{MBPP} (code generation;~\citet{austin2021program}\textsuperscript{13}): 
MBPP contains approximately 1,000 crowdsourced Python programming tasks aimed at entry-level programmers. Each problem includes a description and test cases covering basic programming constructs such as loops, list manipulation, strings, and simple algorithms. MBPP is widely used to measure fundamental code-generation abilities and correctness.

\textbf{MBPP+}:
 MBPP+ augments the original MBPP benchmark with rephrasings, structural variations, and more diverse test cases. It evaluates whether a model can maintain correctness when task formulations shift, providing a more rigorous assessment of generalization beyond the original surface templates.

\footnotetext[1]{\url{https://huggingface.co/datasets/meta-math/MetaMathQA}}
\footnotetext[2]{\url{https://huggingface.co/datasets/openai/gsm8k}}
\footnotetext[3]{\url{https://huggingface.co/datasets/deepmind/math}}
\footnotetext[4]{\url{https://huggingface.co/datasets/ChilleD/SVAMP}}
\footnotetext[5]{\url{https://huggingface.co/datasets/math-ai/minervamath}}
\footnotetext[6]{\url{https://huggingface.co/datasets/MARIO-Math-Reasoning/Gaokao2023-Math-En}}
\footnotetext[7]{\url{https://huggingface.co/datasets/Hothan/OlympiadBench}}
\footnotetext[8]{\url{https://huggingface.co/datasets/math-ai/amc23}}
\footnotetext[9]{\url{https://huggingface.co/datasets/math-ai/aime24}}
\footnotetext[10]{\url{https://huggingface.co/datasets/ndavidson/sat-math-chain-of-thought}}
\footnotetext[11]{\url{https://huggingface.co/datasets/nickrosh/Evol-Instruct-Code-80k-v1}}
\footnotetext[12]{\url{https://huggingface.co/datasets/openai/openai_humaneval}}
\footnotetext[13]{\url{https://huggingface.co/datasets/google-research-datasets/mbpp}}

\subsection{Training Details}
\label{appendix:training_details}
\begin{table*}[h!]
\centering
\caption{Hyperparameter values used in VCRD (distillation stage) experiments across all task families.}
\label{hyperparams}
\begin{tabular}{lc}
\toprule
\textbf{Hyperparameter} & \textbf{Value} \\
\midrule

Finetuning method & LoRA (rank = 16, $\alpha$ = 128, dropout = 0.05) \\

LoRA target modules & all self-attention and MLP layers \\

Learning rate & $5 \times 10^{-5}$ \\

Max sequence length & 1024 \\

Max prompt length & 512 \\

Effective batch size & 128 \\

LR scheduler & cosine \\

Warmup ratio & 0.1 \\

Validity smoothing $\varepsilon$ & $1 \times 10^{-8}$ \\


\# Epochs (Math Reasoning) & 2 epochs \\

\# Epochs (Code Generation) & 1 epochs \\

\# Epochs (Instruction Following) & 3 epochs \\

\bottomrule
\end{tabular}
\end{table*}

In this subsection, we describe the hyperparameters and implementation settings used for training VCRD. 
Table~\ref{hyperparams} summarizes all configuration values. 
All models are trained using LoRA-based adaptation~\citep{hu2022lora} for parameter-efficient finetuning, and we adopt a unified optimization setup across mathematical reasoning, code generation, and instruction-following tasks. 
We use the maximum batch size that fits on our 4~NVIDIA A100~(80GB) GPUs, combined with gradient accumulation, to attain an effective batch size of~128. 
Student models are first initialized via supervised finetuning on task-specific datasets with ground-truth responses, after which validity-calibrated distillation is applied. 
Following the setup of~\citet{kodistillm}, we do not use any language-modeling loss on pretraining corpora. For all experiments, we used FlashAttention and bf16 precision.

For mathematical reasoning, we set the distillation weights to $(\lambda_T,\lambda_S)=(1,1)$ for the Qwen2.5-Math experiments and $(2,1.5)$ for the Qwen2-Math experiments. 
Students are first initialized by supervised fine‑tuning on the full MetaMathQA~\citep{yumetamath} dataset for one epoch  with learning rate $5\times 10^{-6}$, cosine learning-rate scheduling, maximum sequence length 2048, and warmup ratio~0.1. Then, for distillation, we construct a training set by randomly sampling  50k randomly selected MetaMathQA samples and collecting both teacher and student responses for each prompt. Distillation performed for two epochs. 

For code generation, we evaluate VCRD on two teacher--student configurations:
Qwen2.5-Coder-7B-Instruct$\rightarrow$Qwen2.5-Coder-1.5B and
Qwen2.5-Coder-14B-Instruct$\rightarrow$Qwen2.5-Coder-7B-Instruct. 
In the 7B$\rightarrow$1.5B setting, we first finetune the 1.5B student on the 
ground-truth code dataset for 5 epochs using FlashAttention, bf16 precision, 
a learning rate of $5\!\times\!10^{-6}$, a maximum sequence length of 2048, 
and a warmup ratio of $0.1$. We then perform VCRD distillation for one epoch 
(600 training iterations). 
For the larger 14B$\rightarrow$7B configuration, we initialize the 7B student 
directly from the Qwen2.5-Coder-7B-Instruct checkpoint without any supervised 
finetuning, and distill for a single epoch of 400 iterations. 
For both configurations, we set the distillation weights to 
$(\lambda_T,\lambda_S) = (2,1)$. Distillation prompts are from the WizardCoder dataset~\citep{luowizardcoder}, which is constructed using Evol-Instruct method~\citep{xu2024wizardlm} code instruction datasets. For distillation, we construct a training set by collecting both teacher and student responses for each prompt from WizardCoder dataset. Distillation performed for two epochs. 

This setup, spanning two distinct model scales, enables us to 
assess whether VCRD's prefix-conditioned validity calibration 
provides consistent improvements in execution-level robustness 
and functional correctness across code-generation regimes.

For instruction-following, first, the
student model is supervised finetuned for one epoch on all Metamath instruction-tuning
dataset using ground-truth responses. During this stage, we set the maximum
sequence length to 2048, the learning rate to $5\times10^{-5}$, and the warmup
ratio to $0.1$. After finetuning, we perform VCRD distillation for three epochs
over the sampled UltraChat prompts. The distillation weights are set to
$(\lambda_T,\lambda_S)=(2,2)$ for the Qwen2.5 teacher-student configuration and
$(2,1)$ for the Qwen2 configuration. for distillation, we construct a 
training set by randomly sampling 50k prompts from the UltraChat200k 
dataset~\citep{ding2023enhancing} and collecting both teacher and student 
responses for each prompt. We evaluate VCRD under two teacher-student 
configurations: Qwen2.5-7B-Instruct$\rightarrow$Qwen2.5-1.5B and 
Qwen2-7B-Instruct$\rightarrow$Qwen2-1.5B. Distillation is run for three epochs 
over the sampled UltraChat prompts.
For evaluation, we report win-rate performance on three common 
instruction-following benchmarks: AlpacaEval~\citep{li2023alpacaeval}, 
Evol-Instruct~\citep{xu2024wizardlm}, and UltraFeedback~\citep{cuiultrafeedback}. 
Following prior work~\citep{kodistillm}, we use GPT-4o or GPT-4o-mini as 
LLM-as-a-Judge~\citep{zheng2023judging} to provide consistent preference-based 
scoring across models. This benchmark suite spans conversational, multi-step, 
and preference-heavy instructions, enabling a broad and challenging evaluation 
of VCRD's ability to provide stable supervisory signals beyond purely 
reasoning-focused tasks.

\subsection{Evaluation}

\label{sec:eval_appendix}

We follow a similar evaluation protocol to~\citep{kodistillm}). Below we describe the specific
settings used for each task family.

\textbf{Math Reasoning.}
For evaluating mathematical reasoning performance,  we use a single A100~80GB GPU and we follow the official
Qwen2.5-Math evaluation protocol\footnote{\url{https://github.com/QwenLM/Qwen2.5-Math}}, using the
\texttt{qwen25-math-cot} prompt format. All evaluations are performed with
greedy decoding (temperature~0, top-$p$=1) and a single sample per query under
the vLLM backend. 

\textbf{Code Generation.}
For code evaluations, we again use a single A100~80GB GPU and employ
greedy decoding with a maximum generation length of~1024. Mathematical
reasoning performance is measured using the EvalPlus framework
\citep{liu2023your}, which executes predicted solutions to verify
correctness. For code generation, we use the HumanEval, HumanEval+, MBPP, and
MBPP+ evaluation suites, all executed with their official test harnesses to
ensure consistency and prevent overfitting to reference implementations.

\textbf{Instruction following.}
For instruction-following evaluation, we generate model responses
using a single NVIDIA A100~80GB GPU with temperature~0.8, top-$p$=0.95, and a
maximum generation length of~512 tokens. Each comparison is performed using the
pairwise system prompt described in Appendix~\ref{sec:if_judge_prompt}, which
presents the judge model with the user question and two candidate responses and
asks for a preference decision. To mitigate position bias, we randomly swap the
order of the two responses and average win rates across both permutations. For AlpacaEval, we use the officially released \texttt{text-davinci-003}
reference responses. For Evol-Instruct and UltraFeedback, we compare generated
responses against \texttt{gpt-3.5-turbo} outputs that were produced internally,
following the same protocol used in prior benchmark releases. All evaluations
use GPT-4o or GPT-4o-mini as the judge to ensure consistent preference-based
scoring across models.

\subsubsection{Instruction-Following Judge Prompt}
\label{sec:if_judge_prompt}

\begin{tcolorbox} [colback=white,colframe=black!15, left=1mm, right=1mm, top=1mm, bottom=1mm]
\textbf{[System]} \\
Act as an impartial judge and evaluate which of two assistant responses better
answers the user question shown below. Your judgment should consider correctness,
instruction compliance, coherence, clarity, and overall helpfulness. Ignore
superficial stylistic differences unless they affect content quality.

Read both responses carefully and provide a brief explanation of your choice.
To avoid position bias, do not let the order of presentation influence your
decision. After your explanation, output your verdict strictly in one of the
following formats: \texttt{[[A]]}, \texttt{[[B]]}, or \texttt{[[C]]} (tie).

\vspace{0.5em}
\textbf{[User Question]} \\
\texttt{\{question\}}

\vspace{0.5em}
\textbf{[Response A]} \\
\texttt{\{answer\_A\}}

\vspace{0.5em}
\textbf{[Response B]} \\
\texttt{\{answer\_B\}}
\end{tcolorbox}

\subsection{PRM‑Free Validity Approximation Setup} \label{prmfree}
For teacher-student pairs without an available process reward model (e.g., DeepSeek-Coder), we approximate local validity using the teacher's next-token likelihood under the shared prefix. The student proposes a greedy next token $a_t^S = \arg\max_a p_s(a \mid c_t)$, and we use the teacher-assigned probability $p_t(a_t^S \mid c_t)$ as the raw student reward. To obtain a stable teacher baseline, we take the top-$k$ teacher probabilities with $k=128$ and compute a collision-based score $\sum_{i=1}^{k} p_t(i \mid c_t)^2$. The local validity weight is then defined as
\[
w_t
=
\frac{p_t(a_t^S \mid c_t) }{\sum_{i=1}^{k} p_t(i\mid c_t)^2 + \varepsilon},
\qquad \varepsilon = 10^{-8},
\]
followed by log-space smoothing and symmetric clamping: we compute $\log w_t$, scale by a factor $\gamma = 0.5$, and clip to $[\log 0.5, \log 2.0]$ before exponentiating back to obtain $w_t \in [0.5, 2.0]$. This weight is applied multiplicatively to both LV--SKL and LV--SRKL losses at each token. 
\section{Ablation Study} \label{Rewardtype ablation}
Table~\ref{tab:reward_type_ablation} further examines how local validity scores should be converted into distillation weights. LV-Joint-$r_s$ weights the loss using only the student's raw validity score $r_s$, ignoring the teacher score under the same prefix. Its lower performance indicates that absolute validity alone is not a reliable allocator of learning signal. LV-Joint-($r_s-r_t$) uses the difference between student and teacher validity scores, which better reflects the local improvement direction and improves over raw $r_s$ weighting. However, it still falls short of VCRD, which uses the relative validity ratio $r_s/(r_t+\varepsilon)$. This ratio compares the student continuation against the teacher continuation under the same context while normalizing for the local scale of the judge scores. The superior performance of VCRD supports the use of relative, scale-normalized validity calibration for reasoning distillation, consistent with the analysis in Section~\ref{sec:theory}.
\begin{table*}[t]
\centering
\caption{
Ablation of validity-weighting rules on Qwen2.5-Math-7B-Instruct $\rightarrow$ Qwen2.5-Math-1.5B.
AVG is the average over all datasets.
}
\label{tab:reward_type_ablation}
\resizebox{\linewidth}{!}{%
\begin{tabular}{lccccccc|c}
\toprule
\multicolumn{9}{c}{\textbf{Qwen2.5-Math-7B-Instruct ($\mathcal{M}_T$) $\rightarrow$ Qwen2.5-Math-1.5B ($\mathcal{M}_S$)}} \\
\midrule
\textbf{Method}
& \textbf{GSM8K}
& \textbf{MATH}
& \textbf{Gaokao}
& \textbf{Olympiad}
& \textbf{CMATH}
& \textbf{AIME24}
& \textbf{AMC23}
& \textbf{AVG} \\
\midrule

LV-Joint-$r_s$
& 85.6 & 73.8 & 60.5 & 32.7 & 90.0 & 6.7 & 47.5 & 56.68 \\

LV-Joint-($r_s-r_t$)
& 85.2 & 73.6 & 61.8 & 34.2 & 89.2 & 10.0 & 52.5 & 58.03 \\

\textbf{VCRD} 
& 85.0 & 73.7 & 62.1 & 35.4 & 90.2 & 13.3 & 60.0
& \textbf{59.96} \\

\bottomrule
\end{tabular}
}
\end{table*}
\section{Computational Cost}
\label{computational cost}

\begin{table}[t]
\centering
\caption{
Average training time per iteration on Qwen2.5-Math-7B-Instruct ($\mathcal{M}_T$) $\rightarrow$ Qwen2.5-Math-1.5B ($\mathcal{M}_S$). 
}
\label{tab:runtime_qwen25_math}
\resizebox{0.55\columnwidth}{!}{%
\begin{tabular}{lc}
\toprule
\textbf{Method} & \textbf{Time (s/iteration)} \\
\midrule
KD & 30.19 \\
Distillm-2 & 32.84 \\
VCRD (PRM-based) & 33.84 \\
VCRD-Prob (PRM-free) & 32.31 \\
\bottomrule
\end{tabular}
}
\end{table}

We report the training-time cost of VCRD in Table~\ref{tab:runtime_qwen25_math}. Following Distillm-2, all methods use a shared offline data-preparation stage in which teacher and/or student trajectories are generated once before training and then kept fixed. This isolates the comparison to the distillation objective itself, rather than differences in rollout generation. Therefore, the reported times exclude this shared offline stage and measure only the training iteration cost. VCRD introduces only a small additional overhead over existing distillation baselines. In the same Qwen2.5-Math setting, KD requires 30.19 s/iteration, Distillm-2 requires 32.84 s/iteration, and PRM-based VCRD requires 33.84 s/iteration. This shows that the extra PRM-based local-validity computation adds only about 1 second per iteration over Distillm-2. The PRM-free variant has essentially the same cost as standard distillation, requiring 32.31 s/iteration. 
\section{PRM‑sensitivity} \label{prm_sens}
We further evaluate the sensitivity of VCRD to the PRM used as the validity judge. As shown in Table~\ref{tab:prm-sensitivity-qwen2}, replacing the Skywork PRM-1.5B with the larger Qwen2.5-Math-PRM-7B yields a slightly higher average score, improving from 55.38 to 55.48. This shows that VCRD is robust to PRM scale. Importantly, both PRM-based VCRD variants outperform the non-calibrated SKL+SRKL baseline and the pure on-policy SRKL baseline. 
\begin{table}[ht]
\centering
\caption{PRM sensitivity results.Qwen2.5-Math-7B-Instruct ($\mathcal{M}_T$) $\rightarrow$ Qwen2.5-Math-1.5B ($\mathcal{M}_S$).}
\label{tab:prm-sensitivity-qwen2}
\resizebox{\linewidth}{!}{
\begin{tabular}{lccccccc}
\toprule
\textbf{PRM} & \textbf{GSM8K} & \textbf{MATH} & \textbf{SVAMP} & \textbf{Minerva} & \textbf{Gaokao} & \textbf{Olympiad} & \textbf{AVG} \\
\midrule
VCRD (Skywork-o1-Open-PRM-1.5B ) & 81.0 & 61.8 & 85.9 & 25.7 & 53.2 & 24.7 & 55.38 \\
VCRD (Qwen2.5-Math-PRM-7B)     & 81.6 & 60.8 & 85.5 & 27.2 & 53.5 & 24.3 & 55.48 \\
\midrule
SKL+SRKL & 81.6 &	61.0 &	85.9&	25.7&	51.4&	21.9	&54.58 \\
SRKL (on-policy KD) &78.0&57.6&83.6&17.3&47.5&21.9&50.98\\

\bottomrule
\end{tabular}
}
\end{table}

\end{document}